\documentclass{article}




\usepackage[nonatbib,final]{neurips_2021}


\usepackage[utf8]{inputenc} 
\usepackage[T1]{fontenc}    
\usepackage{hyperref}       
\usepackage{url}            
\usepackage{booktabs}       
\usepackage{amsfonts}       
\usepackage{nicefrac}       
\usepackage{tcolorbox}
\usepackage{microtype}      
\usepackage{xcolor}         
\usepackage{hyperref}
\usepackage{url}
\usepackage{caption}
\usepackage{array}
\usepackage{makecell}
\usepackage{tabularx}
\usepackage{amsmath}
\usepackage{amsthm}
\usepackage{amssymb}
\usepackage{mathtools}      
\usepackage{microtype}      
\usepackage{nicefrac}       
\usepackage{subcaption} 
\usepackage{url}  

\usepackage{xspace}
\usepackage{amssymb}
\usepackage{bm}

\usepackage{multirow}
\usepackage{enumitem}

\usepackage{amsthm}
\usepackage{bbm}
\usepackage[numbers]{natbib} 
\usepackage{xcolor}
 \usepackage{appendix}

\usepackage{wrapfig}

\usepackage{thmtools,thm-restate}

\newtheorem{Thm}{Theorem}

\newtheorem{Df}{Definition}

\newtheorem{Coro}[Thm]{Corollary}

\DeclareMathOperator*{\E}{\mathbb{E}}

\newcommand{\BK}[1]{\textcolor{red}{BK: #1}}
\newcommand{\rwd}{\text{rewinding}\xspace}
\newcommand{\RWD}{\text{Rewinding}\xspace}
\newcommand{\init}{\text{initialization}\xspace}
\newcommand{\INIT}{\text{Initialization}\xspace}
\newcommand{\posthoc}{\text{traditional}\xspace}

\newcommand{\cnet}{\texttt{CARD}\xspace}
\newcommand{\cnets}{\texttt{CARDs}\xspace}

\newcommand{\cdeck}{\texttt{CARD-Deck}\xspace}
\newcommand{\cdecks}{\texttt{CARD-Decks}\xspace}

\title{{\em A Winning Hand}: Compressing Deep Networks Can Improve Out-Of-Distribution Robustness}


%

\author{James Diffenderfer, Brian R. Bartoldson\thanks{equal contribution}, Shreya Chaganti\setcounter{footnote}{0}\footnotemark, Jize Zhang, Bhavya Kailkhura \\
  Lawrence Livermore National Laboratory\\
  \texttt{\{diffenderfer2, bartoldson, chaganti1, zhang64, kailkhura1\}@llnl.gov} \\
}

\begin{document}

\maketitle

\begin{abstract}
Successful adoption of deep learning (DL) in the wild requires models to be: (1) compact, (2) accurate, and (3) robust to distributional shifts. Unfortunately, efforts towards simultaneously meeting these requirements have mostly been unsuccessful.
This raises an important question: {\em ``Is the inability to create \texttt{C}ompact, \texttt{A}ccurate, and \texttt{R}obust \texttt{D}eep neural networks (\cnets) fundamental?''} 
To answer this question, we perform a large-scale analysis of popular model compression techniques which uncovers several intriguing patterns. 
Notably, in contrast to traditional pruning approaches (e.g., fine tuning and gradual magnitude pruning), we find that ``lottery ticket-style'' approaches can surprisingly be used to produce \cnets, including binary-weight \cnets. 
Specifically, we are able to create extremely compact \cnets that, compared to their larger counterparts, have similar test accuracy and matching (or better) robustness---simply by pruning and (optionally) quantizing.
Leveraging the compactness of \cnets, we develop a simple domain-adaptive test-time ensembling approach (\cdeck) that uses a gating module to dynamically select appropriate \cnets from the \cdeck based on their spectral-similarity with test samples. 
The proposed approach builds a ``winning hand'' of \cnets that establishes a new state-of-the-art \citep{croce2020robustbench} on CIFAR-10-C accuracies (i.e., \textit{96.8\% standard and 92.75\% robust}) and CIFAR-100-C accuracies (i.e., \textit{80.6\% standard and 71.3\% robust}) with better memory usage than non-compressed baselines 
(pretrained \cnets available at \cite{croce2020robustbench}). 
Finally, we provide theoretical support for our empirical findings. 
\end{abstract}

\section{Introduction}

Deep Neural Networks (DNNs) have achieved unprecedented success in a wide range of applications due to their remarkably high accuracy~\citep{goodfellow2016deep}. 
However, this high performance stems from significant growth in DNN model size; i.e., massive overparameterization. 
Furthermore, these highly overparameterized models are known to be susceptible to the out-of-distribution (OOD) shifts encountered during their deployment in the wild~\cite{bulusu2020anomalous}. 
This resource-inefficiency and OOD brittleness of state-of-the-art (SOTA) DNNs severely limits the potential applications DL can make an impact on. 

For example, consider the ``Mars rover mission'' that uses laser-induced breakdown spectroscopy (LIBS) to search for microbial life. It is well accepted that endowing the rover with DNNs to analyze complex LIBS spectra could produce scientific breakthroughs~\citep{bhardwaj2021semi}. However, employing DNNs in such circumstances is challenging:
1) as these devices are battery operated, the model has to be lightweight so it consumes less memory with reduced power consumption, and 2) the model must be able to efficiently handle domain shifts in spectra caused by environmental noise.
These requirements are not specific only to the aforementioned use case but arise in any resource-limited application using DL in the wild. 
The fact that SOTA DNNs do not satisfy {\em compactness} and {\em OOD robustness} requirements is holding us back from leveraging advances in DL to make scientific discoveries.

This work is driven by two questions around this crucial problem.
{\textbf{Q1.}} {\em Can we show the existence of compact, accurate, and robust DNNs (\cnets)?} 
{\textbf{Q2.}} If yes, {\em can we apply existing robustness improvement techniques to \cnets to further amplify performance while maintaining compactness?}

Notably, there have been some recent successes in addressing each of the challenges \cnets present in isolation. The authors in \citep{hendrycks2019augmix, kireev2021effectiveness} developed data augmentation methods for achieving high OOD robustness without sacrificing accuracy on the clean data. The authors in \citep{frankle2018lottery, diffenderfer2021multiprize} developed pruning and quantization approaches for achieving high accuracy at extreme levels of model compression. 
However, efforts towards achieving model compactness, high accuracy, and OOD robustness simultaneously have mostly been unsuccessful. 
For example, \cite{hooker2019compressed} and \cite{liebenwein2021lost} recently showed that compressed DNNs achieve accuracies similar to the original networks' but are far more brittle when faced with OOD data. Perhaps unsurprisingly, the current solution in the robust DL community to improve the OOD robustness (and accuracy) is to increase the model size (e.g., \citep{hendrycks2020many, gowal2020uncovering,croce2020robustbench}).

In this paper, we demonstrate that these negative results are a byproduct of inapt compression strategies, and the inability to create \cnets is not fundamental (answering \textbf{Q1} in the affirmative). Specifically, we perform a large-scale comparison by varying architectures, training methods, and pruning rates for a range of compression techniques.
We find that in contrast to traditional pruning methods (e.g., fine tuning~\cite{han2015learning} and gradual magnitude pruning~\cite{zhu2017prune}), ``lottery ticket-style'' compression approaches~\cite{frankle2018lottery, renda2020comparing, ramanujan2019whats, diffenderfer2021multiprize} can surprisingly be used to create \cnets. 
In other words, we are able to create extremely compact (i.e., sparse and, optionally, binary) \cnets that are significantly more robust compared to their larger and full-precision counterparts while having comparable test accuracy. 
Our results are in sharp contrast to the existing observation that compression is harmful to OOD robustness. 
In fact, we show that compression is capable of providing improved robustness.

We subsequently explore the possibility of using existing robustness-improvement techniques in conjunction with compression strategies to further improve the performance of \cnets. 
{Empirically, we show the compatibility of \cnets with popular existing strategies, such as data augmentation and model size increase.} 
We also propose a new robustness-improvement strategy that leverages the compactness of \cnets via ensembling---this ensembling approach is referred to as a domain-adaptive \cdeck and uses a gating module to dynamically choose appropriate \cnets for each test sample such that the spectral-similarity of the chosen \cnets and test data is maximized.
This proposed adaptive ensembling approach builds a ``winning hand" of \cnets that establishes a new state-of-the-art robustness (and accuracy) on the popular OOD benchmark datasets CIFAR-10-C and CIFAR-100-C with a compact ensemble~\cite{croce2020robustbench} (answering \textbf{Q2} in the affirmative).

Broad implications of our findings are as follows. 
First, there exist sparse networks early in training (sometimes at initialization) that can be trained to become \cnets (i.e., we extend the lottery ticket hypothesis \cite{frankle2018lottery} to robust neural networks via our \cnet hypothesis).
{Second, within a random-weight neural network, there exist \cnets which (despite having untrained weights) perform comparably to more computationally expensive DNNs  (i.e., we extend the \textit{strong} lottery ticket hypothesis \cite{ramanujan2019whats} to robust neural networks via our \cnet hypothesis).}
{Third, compression can be complementary with existing robustness-improving strategies, which suggests that appropriate compression approaches should be considered whenever training models that may be deployed in the wild.}
To summarize our main contributions:
\begin{itemize}
    \item Contrary to most prior results in the literature, we show that compression can improve robustness, providing evidence via extensive experiments on benchmark datasets, 
    supporting our \cnet hypothesis.  
    Our experiments suggest ``lottery ticket-style'' pruning methods and a sufficiently overparameterized model are key factors for producing \cnets (Section \ref{sec:c10-experiments}).
    \item As corruptions in benchmark datasets could be limited (or biased) to certain frequency ranges, we tested the ability of compression to improve robustness to Fourier basis perturbations. This analysis corroborates findings on CIFAR-10/100-C and further highlights that models compressed via different methods have different robustness levels (Section \ref{sec:fourier}).
    \item Leveraging the compactness of \cnets, we develop a test-time adaptive ensembling method, called a domain-adaptive \cdeck, that utilizes \cnets trained with existing techniques for improving OOD robustness. Resulting models set a new SOTA performance \citep{croce2020robustbench} on CIFAR-10-C and CIFAR-100-C while maintaining compactness (Section \ref{sec:sota}).  
    \item Finally, we provide theoretical support for the \cnet hypothesis and the robustness of the domain-adaptive \cdeck ensembling method (Section \ref{sec:theory}). 
\end{itemize}

\section{Is the inability to create CARDs fundamental?} \label{sec:experiments}
Recent studies of the effects of model compression on OOD robustness have been mostly negative.
For instance, \citet{hooker2019compressed} showed that gradual magnitude pruning \citep{zhu2017prune} of a ResNet-50 \citep{he2015deep} trained on ImageNet \citep{russakovsky2015imagenet} caused accuracy to decrease by as much as 40\% on corrupted images from ImageNet-C \citep{hendrycks2019benchmarking}, while performance on the non-corrupted validation images remained strong. \citet{liebenwein2021lost} reported similar findings for different pruning approaches \citep{renda2020comparing,baykal2019sipping} applied to a ResNet-20 model \citep{he2015deep} tested on an analogously corrupted dataset, CIFAR-10-C.
Consistent with these findings, \citet{hendrycks2020many} found that increasing model size tended to improve robustness.\footnote{For additional discussion of this and more related work, please see Appendix~\ref{sec:background}.}

Critically, these studies suggest that model compression may be at odds with the simultaneous achievement of high accuracy and OOD (natural corruption) robustness. 
However, it's possible that these negative results are a byproduct of inapt compression strategies and/or insufficient overparameterization of the network targeted for compression. 
As such, to motivate the scientific question of interest and our empirical/theoretical analyses, we propose the following alternative hypothesis.



\noindent
\begin{tcolorbox}[colframe=black,colback=lightgray!30,boxrule=0.5pt,boxsep=2pt,left=1pt,right=1pt,top=0pt,bottom=0pt]
\noindent \textbf{{CARD} Hypothesis.} \emph{Given a sufficiently overparameterized neural network, a suitable model compression (i.e., pruning and binarization) scheme can yield a compact network with comparable (or higher) accuracy and robustness than the same network when it is trained without compression.}
\end{tcolorbox}

\subsection{Model compression approaches}
For a comprehensive analysis of existing pruning methods, we introduce a framework inspired by those in \citep{renda2020comparing,wang2021emerging} that covers traditional-through-emerging pruning methodologies. 
Broadly, this framework places a pruning method into one of three categories: (a) traditional, (b) rewinding-based lottery ticket, and (c) initialization-based (strong) lottery ticket. 
Specific pruning methods considered in these respective categories are:
(a) fine-tuning and gradual magnitude pruning, (b) weight rewinding and learning rate rewinding, and (c) edgepopup and biprop. 
Precise definitions of these pruning methods and discussion of differences are available in Appendix~\ref{sec:compression}.

Briefly, fine-tuning (FT) \citep{han2015learning} prunes models once at the end of normal training, then fine-tunes the models for a given number of epochs to recover accuracy lost due to pruning; while gradual magnitude pruning (GMP) \citep{zhu2017prune} prunes models throughout training.
Weight rewinding (LTH) \citep{frankle2018lottery,frankle2020linear} is iterative like GMP but fully trains the network, prunes, rewinds the unpruned weights (and learning rate schedule) to their values early in training, then fully trains the subnetwork before pruning again; learning rate rewinding (LRR) \citep{renda2020comparing} is identical to LTH, except only the learning rate schedule is rewound, not the unpruned weights. Finally, edgepopup (EP) \citep{ramanujan2019whats} does not weight-train the network but instead prunes a randomly initialized network, using training data to find weights whose removal improves accuracy (notably, EP can, and does here, operate on signed initialization); biprop (BP) \citep{diffenderfer2021multiprize} proceeds similarly but incorporates a binarization scheme resulting in a binary-weight network regardless of the initialization used. For all of these methods, we make use of global unstructured pruning, which allows for different pruning percentages at each layer of the network. For BP and EP, we additionally consider layerwise pruning, which prunes the same percentage across all layers. 
We use the hyperparameters specifically tuned for each approach; see Appendix~\ref{appendix:experiment-details} for additional details.

\subsection{Accuracy-robustness comparison of global pruning methods} \label{sec:c10-experiments}
To test the \cnet hypothesis, we use: five models 
(VGG \cite{frankle2018lottery,simonyan2014very} and ResNet \cite{he2015deep} style architectures of varying size), five sparsity levels (50\%, 60\%, 80\%, 90\%, 95\%), and six model compression methods (FT, GMP, LTH, LRR, EP, BP). 
For each model, sparsity level, and compression method, five realizations are trained on the CIFAR-10 training set~\cite{krizhevsky2009learning}. 
Model accuracy and robustness are measured using top-1 accuracy on the CIFAR-10 and CIFAR-10-C test sets, respectively. CIFAR-10-C contains 15 different common corruptions from four categories: noise, blur, weather, and digital corruptions \cite{hendrycks2019benchmarking}. As a baseline, we train 5 realizations of each model without compression. 

In Figure~\ref{fig:card-net-all-plot}, we plot our experimental results. 
Accuracy and robustness values are averaged over the five realizations and plotted relative to the average non-compressed baseline performance. The y-axis measures relative difference in percentage points.
The mean baseline accuracy and robustness for each architecture is listed as the reference accuracy in each plot.
The first row of plots indicate accuracy (top-1 accuracy on CIFAR-10 relative to baseline) while the second row indicate robustness (top-1 accuracy on CIFAR-10-C relative to baseline). 
At each sparsity level, error bars extend to the minimum and maximum relative percentage point difference across all realizations.

\begin{figure}[ht]
    \centering
    \includegraphics[width=\textwidth,clip]{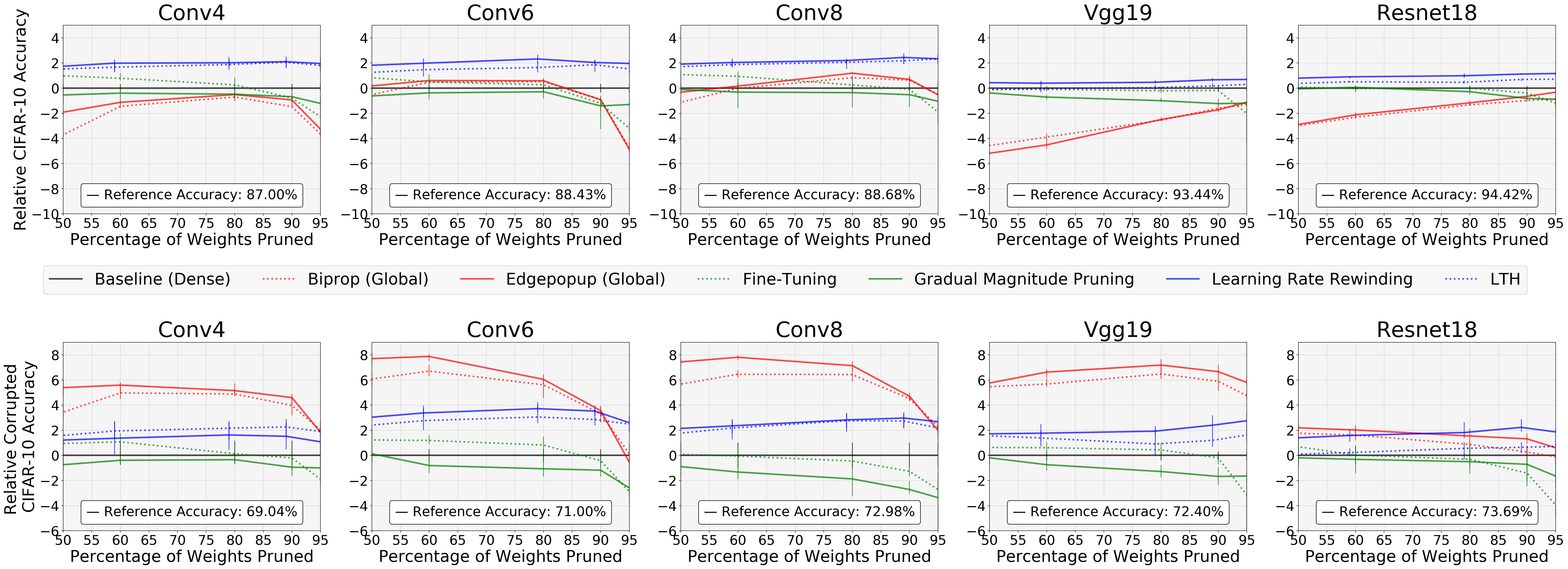}
    \caption{{\bfseries Suitable pruning approaches can improve robustness over dense models}: Comparing the Top-1 accuracy of pruned models relative to the average of dense baseline models on CIFAR-10 and CIFAR-10-C demonstrates that \cnets exist and can be produced using LRR, LTH, BP, or EP.}
    \label{fig:card-net-all-plot}
\end{figure}

Our results for \posthoc methods, i.e., Fine-Tuning and Gradual Magnitude Pruning, are consistent with previous works \citep{hooker2019compressed, liebenwein2021lost} as the robustness of models pruned using these methods degrades relative to the dense models', particularly in higher pruning regimes. 
However, we find that \rwd and \init based pruning approaches consistently produce notable gains in robustness relative to dense baselines while matching (and sometimes surpassing) the accuracy of the dense baseline. 
In particular, the \rwd class of methods provide a consistent, moderate improvement to both accuracy and robustness while the \init class provides more substantial gains in robustness even when the accuracy is slightly below the baseline accuracy. 
The significance of overparameterization to finding highly compact \cnets using \init methods is evident for all architecture types, as the robustness of these models in higher pruning regimes improves at increasing levels of parameterization {for a given architecture class}. 
However, even in models with fewer parameters, we find that \init methods are able to provide notable robustness gains.

Additional experiments involving \init methods are provided in Appendix~\ref{appendix:init-experiments}. Specifically, a comparison of the performance EP and BP using layerwise and global pruning is performed in Section~\ref{appendix:layer-vs-global} and
a comparison of full-precision and binary-weight EP models in Section~\ref{appendix:EP_full_precision}. Empirical results in Section~\ref{appendix:EP_full_precision} indicate that robustness gains provided by EP- and BP-pruned models may be a feature of \init pruning methods and not solely due to weight-binarization. 

\subsection{Viewing the effect of compression on OOD robustness through a spectral lens} \label{sec:fourier}
As CIFAR-10-C corruptions are limited to certain frequency ranges and combinations \cite{yin2019fourier}, it is of interest to validate if the robustness effects of different pruning methods observed in Section~\ref{sec:c10-experiments} hold on a broader ranges of  frequencies. 
To this end, we perform a frequency-domain analysis by utilizing the Fourier sensitivity method \cite{yin2019fourier}, which we briefly summarize below.

Given a model and a test dataset, each image in the test dataset is perturbed using additive noise in the form of 2D Fourier basis matrices, denoted by $U_{i,j} \in \mathbb{R}^{d1\times d2}$. 
Specifically, for an image $X$  and a 2D Fourier basis matrix $U_{i,j}$ a perturbed image is computed by $X_{i,j} = X + r \varepsilon U_{i,j}$, where $r$ is chosen uniformly at random from $\{-1, 1\}$ and $\varepsilon > 0$ is used to scale the norm of the perturbation. 
Note that each channel of the image is perturbed independently. 
Given a set of test images, each Fourier basis matrix can be used to generate a perturbed test set of images on which the test error for the model is measured.
Plotting the error rates as a function of frequencies $(i, j)$ yields the Fourier error heatmap of a model -- a visualization of the sensitivity of a model to different frequency perturbations in the Fourier domain.
Informally, the center of the heat map contains perturbations corresponding to the lowest frequencies 
and the edges correspond to the highest frequencies.

We generate heatmaps for models corresponding to each pruning method as well as layerwise pruned models using EP and BP. 
The norm of the perturbation, $\varepsilon$, is varied over the set $\{3, 4, 6\}$ to represent low, medium, and high levels of perturbation severity. As a reference, we include heatmaps for the dense (non-compressed) baseline model. 
Fourier heatmaps for the Conv8 architecture at 80\% prune percentage are provided in Figure~\ref{fig:fourier-conv8} while additional heatmaps can be found in Section~\ref{appendix:all_heatmaps}. 

\begin{figure}[ht]
\centering
  \begin{tabular}{@{}c@{}}
    \includegraphics[width=\textwidth]{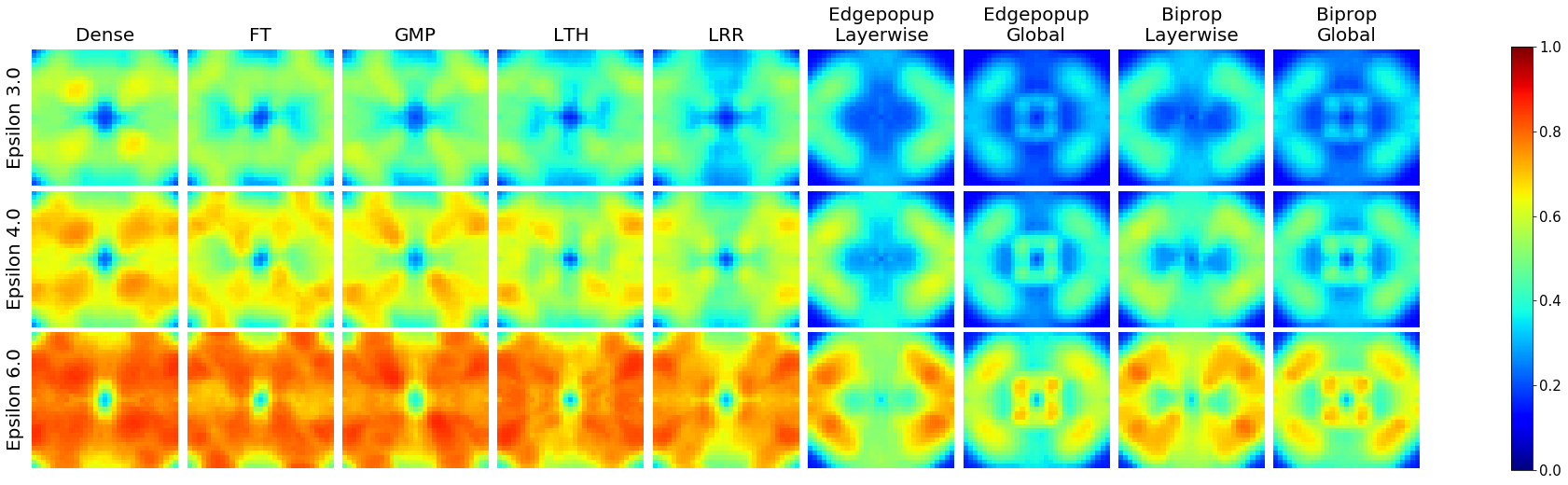} \\
    \includegraphics[width=\textwidth]{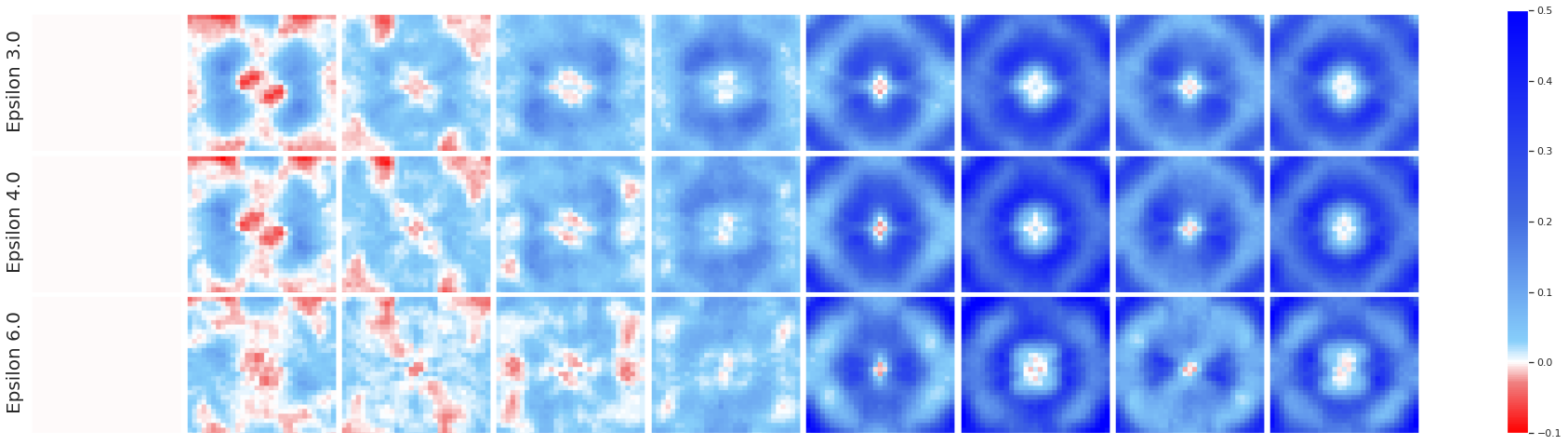}
  \end{tabular}
  \caption{{\bfseries Visualizing the response of compressed models to perturbations at different frequencies}: The top three rows are Fourier heatmaps 
  for Conv8 trained on CIFAR-10 with 80\% of weights pruned. The bottom three rows are difference to the baseline with blue regions indicating lower error rate than baseline. Init. methods provide up to a 50 percentage point improvement in some instances.}
  \label{fig:fourier-conv8}
\end{figure}

Figure~\ref{fig:fourier-conv8} illustrates that \init pruning methods reduce the error rate across nearly the full spectrum of Fourier perturbations relative to the dense model. 
Additionally, \init pruning methods using layerwise pruning present a different response, or error rate, at certain frequency corruptions when compared to heatmaps of global \init pruning methods. 
The difference heatmaps show that \rwd methods offer mild to moderate improvements across much of the frequency spectrum with LRR outperforming LTH in a few regions of the heatmap.
The difference heatmaps also highlight that \posthoc methods degrade the robustness to more Fourier perturbations than other compression methods and result in an increased error rate of 10 percentage points (relative to the dense baseline) in some cases. 
These findings further suggest that the robustness of a compressed model is dependent on the compression method used or the resulting structure of the sparsity.
In Appendix~\ref{appendix:n_shot}, we provide additional heatmaps to examine the impact on robustness when varying the number of rewinding steps 
used by \rwd methods.

To summarize, we have empirically verified our \cnet hypothesis by demonstrating that ``lottery ticket-style'' compression methods can produce compact models with accuracy and robustness comparable to (or higher than) their dense counterparts.

\section{Creating a winning hand of CARDs} \label{sec:sota}
Having demonstrated that certain model compression techniques are capable of producing \cnets, we explore using existing techniques for improving model robustness in conjunction with compression strategies to produce \cnets that further improve robustness.
We consider three popular existing strategies for improving model robustness and, further, propose a test-time adaptive ensembling strategy, called a domain-adaptive \cdeck, that leverages these strategies to maintain compactness and efficiency while improving accuracy and robustness over individual \cnets.

\subsection{Popular strategies for improving model robustness}
\paragraph{Data augmentation.}
A popular approach for improving robustness involves data augmentations.
We consider two augmentation techniques that are (at the time of writing) leading methods on RobustBench \cite{croce2020robustbench}.  
The first is AugMix \cite{hendrycks2019augmix} which can provide improved robustness without compromising accuracy
by randomly sampling different augmentations, applying them to a training image, then ``mixing" the resulting augmented image with the original.
The second method independently adds Gaussian noise $\mathcal{N}(\mu = 0,\sigma =0.1)$
to all the pixels with probability $p =0.5$ \cite{kireev2021effectiveness}.

\paragraph{Larger models.}
Another popular strategy for improving OOD robustness (and accuracy) in the robust DL community is to increase the model size (e.g., \citep{hendrycks2020many, gowal2020uncovering,croce2020robustbench}).
Hence, we also consider this strategy to investigate if the performance of \cnets can be amplified by compressing larger models. 

\paragraph{Model Ensembling.} It is natural to consider exploiting \cnet compactness to amplify accuracy and robustness by ensembling \cite{polikar2012ensemble}  \cnets. 
For example, an ensemble of two to six \cnets pruned to 95\% sparsity only uses 10\% to 30\% of the parameter count required by a single dense model. 


\subsection{Playing the right CARD to improve accuracy-robustness performance}
Ensembling \cnets trained with state-of-the-art data augmentation techniques has the potential to provide additional robustness gains.
We call such ensembles \cdecks and propose two strategies: (1) domain-agnostic \cdecks and (2) domain-adaptive \cdecks. 
In both strategies, the ensemble consists of \cnets that have been trained on the same dataset under different augmentation schemes. 
The domain-adaptive \cdeck utilizes a spectral-similarity metric to select a subset of \cnets from the \cdeck that should be used to make predictions based on the current test data.
We first define this metric then provide formal definitions for both \cdeck methods.

\paragraph{A spectral-similarity metric.}
Let $x_{train} \in \mathbb{R}^{D_1 \times D_2 \times N}$ denote the N unaugmented training images of dimension $D_1 \times D_2$, $A = \{ a_k \}_{k=1}^m$ denote a set of $m$ different augmentation schemes, and $\hat{S}_{a,P}(x)$ denote a sampling of $P$ images from $x$ where augmentation $a \in A$ has been applied to $x$. Motivated by our analysis using Fourier heatmaps, 
we propose a spectral-similarity metric to compare representatives from augmented versions of the training sets, $\{ \hat{S}_{a,P}(x_{train})\}_{a \in A}$, to the test data. First, we define $F(\cdot)$ 
as a map that computes the 1D radially-averaged power spectrum for images of dimension $D_1 \times D_2$ then takes the reciprocal of each component. 
Our spectral-similarity metric is a map $d_{ss}: \mathbb{R}^{D_1 \times D_2 \times P} \times \mathbb{R}^{D_1 \times D_2 \times M} \to \mathbb{R}$ defined by $d_{ss}(\bm{X}, \bm{Y}) = \min_{1 \leq i \leq P} \| (F(X_i) / \| F(X_i) \|) - \frac{1}{M} \sum_{j=1}^{M} (F(Y_j) / \| F(Y_j) \|) \|$. 
In practice, we found that the 1D power spectra for different augmentation types were more separable in the higher frequencies of the power spectrum leading to the use of the reciprocal in the definition of $F(\cdot)$. 


\begin{figure}[ht]
    \centering
    \includegraphics[width=\textwidth]{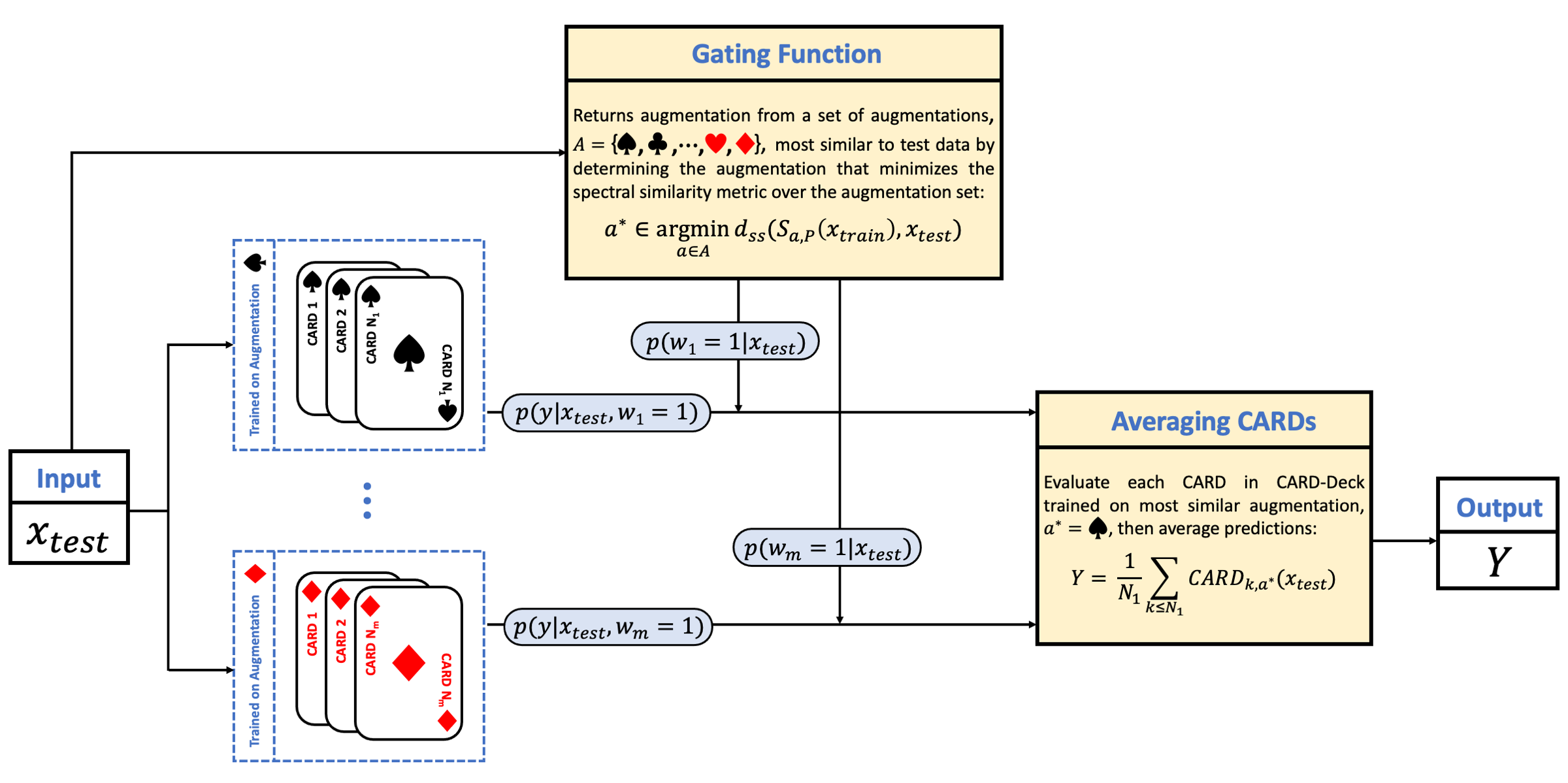}
    \caption{{\bfseries Selecting a ``winning hand" using a domain-adaptive \cdeck}: \cnets are grouped based on the data augmentation scheme used during training. 
    At test time, a gating function 
    identifies the augmentation scheme, $a^*$ with data most similar to the test data. 
    The \cnets trained using $a^*$ are used and the average prediction is returned as the output.}
    \label{fig:card-deck-visualization}
\end{figure}

\paragraph{A ``winning hand'' of CARDs by test-time ensembling.}
An $n$-\cdeck ensemble is composed of $n$ \cnets given by $f^{Deck} = \{ f^{a_{k}} \}_{k=1}^n$ where $a_k$ is one of the $m$ augmentation schemes from $A$ and the superscript in $f^{a_{k}}$ denotes that this \cnet was trained used data from the distribution $S_{a_k} (x_{train})$. 
Our domain-agnostic $n$-\cdeck averages the prediction of all $n$ \cnets in the deck. Supposing that the output of each \cnet in $f^{Deck}$ is softmax vectors, then the output of the domain-agnostic $n$-\cdeck can be expressed as $\frac{1}{n} \sum_{k=1}^n f^{a_k}(x_{test})$. 
In our domain-adaptive \cdeck, a gating module uses the spectral-similarity metric $d_{ss}$ to determine which augmentation method is most similar to a batch of $M$ test images $x_{test} \in \mathbb{R}^{D_1 \times D_2 \times M}$ provided to the ensemble.
When an augmentation scheme, say $a \in A$, is identified as the most similar to the incoming test data, the domain-adaptive \cdeck utilizes only the \cnets that were trained using the data from the distribution $S_{a}(x_{train})$. 
The set of the most similar augmentations is given by $a^* \in \arg\min_{a \in A} d_{ss} (\hat{S}_{a,P} (x_{train}), x_{test})$. 
We note that $a^*$ is likely to be a singleton set indicating that a single augmentation scheme is most similar. 
If the domain-adaptive \cdeck contains multiple \cnets trained using the same data augmentation scheme, prediction averaging is used on these \cnets and returned as the \cdeck prediction. 
Given $a^*$ and letting $\mathcal{I}(a) = \{ k : a_k \in a\}$, the output of the domain-adaptive \cdeck can be expressed as $\frac{1}{|\mathcal{I}(a^*)|} \sum_{k \in \mathcal{I}(a^*)} f^{a_k}(x_{test})$.
As computing the spectral-similarity scheme is independent of \cnet evaluation, the domain-adaptive \cdeck provides reduced inference time over the domain-agnostic \cdeck by only evaluating the \cnets necessary for prediction. 
Figure~\ref{fig:card-deck-visualization} provides an illustration of the \cdeck design.


\subsection{Experimental results}

We experiment with four models of increasing size
(ResNeXt-29, ResNet-18, ResNet-50, WideResNet-18-2), three data augmentation methods (clean, AugMix, Gaussian), two sparsity levels (90\%, 95\%), and six compression methods (LTH, LRR, EP (layerwise and global), BP (layerwise and global)). 
For each model, sparsity level, data augmentaion method, and compression method, three realizations are trained on CIFAR-10~\cite{krizhevsky2009learning} and robustness is measured using CIFAR-10-C. Model compactness is measured by calculating the memory usage~\citep{xu2019main}. Similar experiments are performed for CIFAR-100 and CIFAR-100-C, however only 
WideResNet-18-2 and four model compression methods (LTH, LRR, EP (global), BP (global)) are used. As a baseline, three realizations of each model are trained without compression for each data augmentation method. Visualizations of key results are provided in this section and detailed ablation studies are in Appendix~\ref{appendix:card-deck}.

In addition to measuring the performance of \cnets for each configuration (i.e. model, data augmentation, compression method, sparsity level), we also formed domain-agnostic and domain-adaptive $n$-\cdecks of size $n \in \{2, 4, 6\}$ comprised of models using the same compression method and sparsity level. 
For each $n$-\cdeck, half of the \cnets were trained using AugMix and the other half were trained using the Gaussian augmentation. 
To facilitate computation of the spectral-similarity metric in domain-adaptive \cdecks, for each augmentation method $a \in \{ AugMix, Gaussian\}$ we statically created a KD-Tree containing $F(X)/\| F(X) \|$, for all $X \in \hat{S}_{a,P}(x_{train})$.
In our experiments, we took $P = 5000$ and these KD-Trees were generated once and saved (separate from inference process). 
At test time, batches of $M=100$ test images were used in the spectral-similarity metric to determine which augmentation method best represented the corrupted test data. 

\paragraph{Test-time ensembling can provide a ``winning hand''.}
\begin{figure}
\centering
\begin{subfigure}{.5\textwidth}
  \centering
  \includegraphics[width=0.95\textwidth]{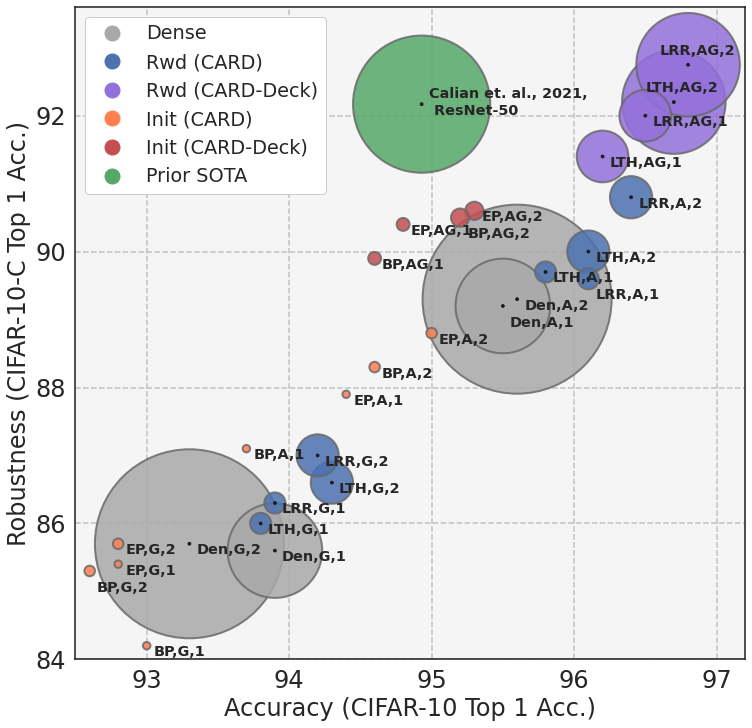}
  \caption{CIFAR-10-C}
  \label{fig:c10-arm}
\end{subfigure}%
\begin{subfigure}{.5\textwidth}
  \centering
  \includegraphics[width=0.95\textwidth]{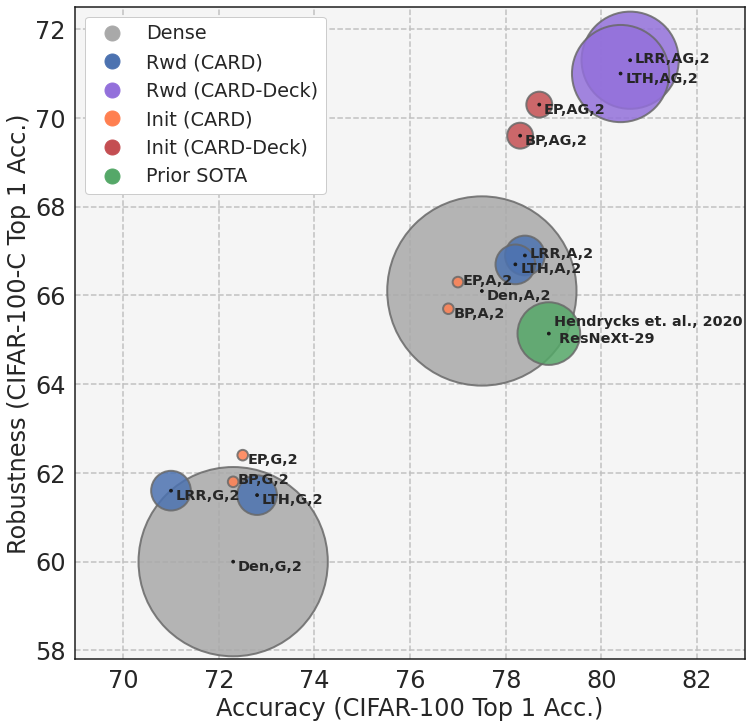}
  \caption{CIFAR-100-C}
  \label{fig:c100-arm}
\end{subfigure}
\caption{\textbf{Accuracy, robustness, and memory trends}. \cnets and \cdecks reduce memory usage (as indicated by the area of each circle) while achieving comparable or improved accuracy (x-axis) and robustness (y-axis).
Annotation indicates \{Compression Method, Data Augmentation, ResNet-18 Width\} with \textbf{A} for AugMix, \textbf{G} for Gaussian, and \textbf{AG} for both (used by \cdecks).
}
\label{fig:acc-rob-mem}
\end{figure}
Figure~\ref{fig:acc-rob-mem} provides a visualization of the performance (accuracy, robustness, and memory usage) of several \cnets and \cdecks as well as dense baselines and the previous SOTA model.
This figure highlights our findings that both \cdeck methods, domain-agnostic and adaptive, are capable of improving the performance beyond the dense baselines while maintaining reduced memory usage.
Notably, we found a single LRR \cnet (a WideResNet-18 at 96\% sparsity) trained with AugMix can attain 91.24\% CIFAR-10-C accuracy, outperforming dense ResNeXt-29 trained with AugMix (a state-of-the-art among methods that do not require non-CIFAR-10 training data) by more than 2 percentage points simply by pruning a larger model, i.e., WideResNet-18.
Our best performing 6-\cdeck using LRR WideResNet-18 models (53.58 MB) sets a new state-of-the-art for CIFAR-10 and CIFAR-10-C accuracies of 96.8\% and 92.75\%, respectively. 
In contrast, the previous best method~\citep{calian2021defending} achieves accuracies (94.93\%, 92.17\%) using increased memory (ResNet-50 with 94.12 MB), extra data (a super resolution network was pre-trained with non-CIFAR-10 data), and a computationally expensive adversarial training procedure. 
More impressively, our computationally lighter binary-weight \cdecks provide
comparable accuracy and robustness to the dense baseline with memory usage as low as 1.67 MB. 
Similar trends hold on CIFAR-100-C where \rwd domain-adaptive \cdecks set a new SOTA performance (80.6\%, 71.3\%) compared to the previous best (78.90\%, 65.14\%) \cite{hendrycks2019augmix}. Note that the binary-weight \cdecks provide almost 5 percentage point robustness gain over the previous best at only 9\% of the memory usage.
Note that the performance of EP and BP \cdecks can be further improved by leveraging more computationally expensive training procedures, e.g., tuning batchnorm parameters \cite{diffenderfer2021multiprize} or searching for EP and BP \cnets in pretrained neural nets.

To summarize, 
\cdecks can maintain compactness while leveraging additional robustness improvement techniques,
LRR \cdecks 
set a new SOTA on CIFAR-10-C and CIFAR-100-C in terms of accuracy and robustness,
binary-weight \cdecks can provide up to $\sim$105x reduction in memory while providing comparable accuracy and robustness,
and the domain-adaptive \cdecks used 
here are $\sim$2x faster than the domain-agnostic \cdecks as only half of the \cnets are used at inference.
Additionally, for 2-\cdecks our domain-adaptive method provides an average robustness gain of 1-2 percentage points over the domain-agnostic method (see Appendix~\ref{appendix:card-deck}).

\section{Theoretical justifications} \label{sec:theory}
This section provides (1) theoretical results that provide support for the \cnet hypothesis beyond what we demonstrated empirically and (2) robustness analysis for domain-adaptive \cdeck strategy.

\subsection{Function approximation view of CARDs}

By leveraging existing theoretical analyses of the Strong and Multi-Prize Lottery Ticket Hypotheses \citep{pensia2021optimal, orseau2020logarithmic, diffenderfer2021multiprize}, we can provide theoretical support for the \cnet hypothesis. 
While we were able to empirically produce \cnets within the same architecture used by the dense model, to prove theoretical results supporting the \cnet hypothesis using existing techniques requires that the compressed network be searched for within an architecture larger than the architecture used for the dense model.
An informal version of this result for binary-weight \cnets is provided here relevant to models produced by joint pruning and binarization compression strategies such as multi-prize tickets \cite{diffenderfer2021multiprize}. 

\begin{Thm}
\label{thm:informal-card}
Given a non-compressed network $F$ with depth $\ell$ and width $w$ with bounded weights that achieves a desired target accuracy and robustness, a random binary network of depth $2\ell$ and width $O\left( (\ell w^{3/2}/\varepsilon) + \ell w \log(\ell w / \delta) \right)$ contains with probability $(1-\delta)$ a binary-weighted \cnet that approximates the target non-compressed network with error at most $\varepsilon$, for any $\varepsilon, \delta > 0$. 
\end{Thm}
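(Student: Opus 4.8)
The plan is to decompose the statement into a \emph{function-approximation} claim and a \emph{robustness-transfer} claim, and to obtain the former by directly invoking the Strong and Multi-Prize Lottery Ticket machinery \citep{pensia2021optimal, orseau2020logarithmic, diffenderfer2021multiprize}. Concretely, I would first show that the stated random binary network of depth $2\ell$ contains, with probability at least $1-\delta$, a pruned binary subnetwork $G$ satisfying $\sup_{x \in \mathcal{X}} \| G(x) - F(x) \| \le \varepsilon$ on the relevant (bounded) input domain $\mathcal{X}$, where $F$ is the given target network. The depth doubling is exactly what these results require: each weight $F_{ij}$ of the target is reproduced not by a single random weight but by a small \emph{block} of random $\{-1,+1\}$ weights whose admissible subset sums approximate $F_{ij}$, and realizing such a block costs one extra layer per target layer. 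This is the place where the $O(\ell w^{3/2}/\varepsilon)$ term of the width enters: to force the accumulated per-layer error across $\ell$ layers down to $\varepsilon$, each scalar weight must be approximated to precision $O(\varepsilon / (\ell \sqrt{w}))$, and the subset-sum lemma needs a number of random candidates growing polynomially in the reciprocal of that precision.

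Second, I would handle the high-probability guarantee by a union bound over all $O(\ell w^2)$ weights (equivalently $O(\ell w)$ neurons) of $F$: each per-weight subset-sum event fails with probability exponentially small in its block size, so demanding a total failure probability of $\delta$ contributes an additive $O(\ell w \log(\ell w / \delta))$ to the width, matching the second term in the stated bound. The binarization is absorbed into this same step, since the Multi-Prize construction \citep{diffenderfer2021multiprize} already works with binary weights together with a (per-layer) scaling factor, so no new approximation primitive is needed beyond adapting the subset-sum argument to the $\{-1,+1\}$ alphabet.

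Third, for the robustness-transfer claim I would argue that uniform closeness of $G$ to $F$ automatically preserves both accuracy and robustness. Since classification is the argmax of the output logits, at any input $x'$ at which $F$ predicts the correct label with margin strictly greater than $2\varepsilon$, the perturbation $\| G(x') - F(x') \| \le \varepsilon$ cannot flip the argmax, so $G$ predicts correctly as well. Applying this pointwise over the clean test distribution gives matching accuracy, and applying it over the corrupted/perturbed distribution used to define robustness (the same $\mathcal{X}$ must be taken large enough to contain these shifted inputs) gives matching robustness, up to the measure of the low-margin set, which is controlled by $\varepsilon$. Thus the single uniform bound yields \emph{both} properties simultaneously, which is precisely the content of the \cnet hypothesis.

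The hard part will be step one, and specifically making the domain $\mathcal{X}$ over which uniform approximation holds large enough to cover the \emph{corrupted} inputs on which robustness is measured, while keeping the width dependence on $\varepsilon$ as stated; a naive argument that only controls error on clean data would transfer accuracy but not robustness. A secondary technical obstacle is propagating the per-weight error through the (possibly unbounded) activations of $\ell$ layers without incurring a factor exponential in depth — this is why the bounded-weight hypothesis on $F$ and the $\sqrt{w}$ factor in the per-weight precision are essential, and I would track the Lipschitz constant of each partial network carefully to ensure the accumulated error stays linear in $\ell$.
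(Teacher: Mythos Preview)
Your proposal is correct and takes the same approach as the paper, which simply notes that the theorem follows immediately from Theorem~2 in \citep{diffenderfer2021multiprize} without further elaboration. Your sketch of the subset-sum and union-bound mechanics together with the explicit margin-based robustness-transfer argument is more detailed than what the paper actually provides, but is fully consistent with (and a natural unpacking of) that single citation.
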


We note that Theorem~\ref{thm:informal-card} follows immediately from Theorem 2 in \citep{diffenderfer2021multiprize}
and, thereby, refer the reader to Theorem 2 in \citep{diffenderfer2021multiprize} for a formal statement.\footnote{Following the acceptance of this paper, improved bounds on the depth and width have been established \cite{sreenivasan2021finding}.}
This result provides a level of confidence with which one can expect to find a binary-weight \cnet that is an $\varepsilon$-approximation of a target (i.e. trained and non-compressed) network.
For full-precision weight \cnets, tighter bounds on the depth and width of a fully-connected network with ReLU activations containing a \cnet that is an $\varepsilon$-approximation of a target network follow from Theorem 3 in \citep{orseau2020logarithmic} which also utilizes a more relaxed hypothesis set. 
Hence, theoretical results supporting the existence of both full-precision and binary-weight \cnets 
, with high probability, provided that 
a sufficiently overparameterized network is used. 

Leveraging these theoretical results,
we provide a corollary on the approximation capabilities of \cdeck. 
We denote by $F(\ell, \bm{w})$ a fully-connected neural network with ReLU activations where $\ell$ denotes the depth of the network and $\bm{w} = [w_0, w_1, \ldots, w_{\ell}] \in \mathbb{N}^{\ell+1}$ is a vector where component $i \in \{ 1, \ldots, \ell \}$ denotes the width of layer $i$ in $F$ and $w_0$ denotes the input dimension of $F$. 

\begin{Coro}[\cdeck Approximation Theorem] \label{thm:informal-card-deck}
Let $\varepsilon > 0$, $\delta > 0$, $n \geq 1$, and $\bm{\lambda} = [\lambda_1, \ldots, \lambda_n]$ satisfying $\sum_{k=1}^n \lambda_k = 1$ and $\lambda_k \geq 0$, for all $k \in [n]$, be given. 
Let $\mathcal{F} = \{ F_k(\ell_k, \bm{w}_k) \}_{k=1}^n$ be a deck of non-compressed fully-connected networks with ReLU activations. 
If the input space $\mathcal{X}$ and each network in the collection $\mathcal{F}$ satisfies the hypotheses of Theorem 3 in \citep{orseau2020logarithmic} (Theorem 2 in \citep{diffenderfer2021multiprize}), then with probability $(1 - \delta)^n$ there exists a deck of $n$ full-precision (binary-weight) \cnets denoted $f^{Deck} = \{ f_k \}_{k=1}^n$ of depth and width specified by Theorem 3 in \citep{orseau2020logarithmic} (Theorem 2 in \citep{diffenderfer2021multiprize}) such that
\vspace{-0.05in}
\begin{small}
\begin{align}
    \sup_{x \in \mathcal{X}} \Big\| \sum_{k=1}^n \lambda_k f_k (x) - \sum_{k=1}^n \lambda_k F_k(\ell, \bm{w}) (x) \Big\| \leq \varepsilon. \label{result:card-deck-1}
\end{align}
\end{small}
\end{Coro}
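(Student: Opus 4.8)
The plan is to reduce the corollary to $n$ independent applications of the single-network approximation guarantee (Theorem~3 in \citep{orseau2020logarithmic} for the full-precision case, or Theorem~2 in \citep{diffenderfer2021multiprize} for the binary-weight case) and then to assemble the pieces with a convexity argument. First I would treat each deck member in isolation: since each target network $F_k$ satisfies the stated hypotheses (bounded weights and the input-space conditions of the invoked theorem), the per-network result guarantees that within a sufficiently overparameterized random network of the prescribed depth and width there exists a \cnet $f_k$ with $\sup_{x \in \mathcal{X}} \| f_k(x) - F_k(x) \| \leq \varepsilon$, and this holds with probability at least $(1-\delta)$.

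Next I would invoke independence to obtain the stated confidence level. Each \cnet $f_k$ is found by pruning (and, in the binary case, binarizing) its \emph{own} independently sampled random network, so the $n$ success events are mutually independent. Hence the probability that all $n$ approximations hold simultaneously is $\prod_{k=1}^{n}(1-\delta) = (1-\delta)^n$. Conditioning on this joint success event, every per-network uniform bound $\| f_k(x) - F_k(x) \| \leq \varepsilon$ holds for all $x \in \mathcal{X}$ at once.

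The final step is the convex-combination estimate. On the joint success event, for every $x \in \mathcal{X}$ I would rewrite the deck error as $\sum_{k=1}^n \lambda_k\bigl(f_k(x) - F_k(x)\bigr)$, apply the triangle inequality, factor out the nonnegative weights $\lambda_k$, and use $\sum_{k=1}^n \lambda_k = 1$:
\[
\Big\| \sum_{k=1}^n \lambda_k f_k(x) - \sum_{k=1}^n \lambda_k F_k(x) \Big\| \leq \sum_{k=1}^n \lambda_k \, \| f_k(x) - F_k(x) \| \leq \Big( \sum_{k=1}^n \lambda_k \Big)\,\varepsilon = \varepsilon .
\]
Taking the supremum over $x \in \mathcal{X}$ then yields \eqref{result:card-deck-1}. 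The binary-weight version follows by the identical three steps, with Theorem~2 in \citep{diffenderfer2021multiprize} supplying the per-network guarantee and the corresponding prescribed depth and width.

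I do not expect a genuinely hard step here; the corollary is essentially a clean composition of the existing single-network theorem with a probabilistic-independence argument. The only two points requiring care are (i) justifying the product form $(1-\delta)^n$, which hinges on the \cnets being searched within \emph{separately} sampled random networks, and (ii) verifying that the error does not amplify under aggregation. Point (ii) is what makes a uniform per-network tolerance of $\varepsilon$ (rather than a rescaled $\varepsilon/n$) sufficient: because $\bm{\lambda}$ is a probability vector, the convex combination is nonexpansive, so the deck inherits exactly the per-network error level. Establishing these two observations is the whole content of the argument.
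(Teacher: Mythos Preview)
Your proposal is correct and matches the paper's own proof essentially step for step: triangle inequality to decompose the convex combination, per-network application of Theorem~3 in \citep{orseau2020logarithmic} (resp.\ Theorem~2 in \citep{diffenderfer2021multiprize}) to obtain $\sup_{x}\|f_k(x)-F_k(x)\|\leq\varepsilon$ with probability $1-\delta$, and independence across the $n$ separately sampled random networks to get $(1-\delta)^n$. Your write-up is in fact slightly more explicit than the paper's about why independence is justified, which is a good addition.
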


A proof of Corollary~\ref{thm:informal-card-deck} is provided in Appendix~\ref{appendix:theory}. 
Note that the target non-compressed networks in Corollary~\ref{thm:informal-card-deck} could be trained on data sampled from augmented distributions, such as augmented distributions using the AugMix and Gaussian methods, provided that the weights of the resulting networks satisfy the hypothesis required from the existing results in \citep{orseau2020logarithmic,diffenderfer2021multiprize}. 
Additionally, the appropriate choice of $\bm{\lambda}$ in Corollary~\ref{thm:informal-card-deck} can yield a domain-agnostic or domain-adaptive \cdeck.

\subsection{Robustness analysis of CARD-Deck}
To provide the theoretical justification behind our \cdeck approach over a single classifier, we first define a robustness measure for a given classifier ensemble trained on a set of augmentations w.r.t. a corruption set encountered at the test-time. 
We assume that each test sample may encounter a specific corruption type $c$ from a given OOD set and be transformed to a corrupted test sample $x_c$.
Let us assume $f^a$ is learnt using a learning algorithm $L$ using the augmented training data $S_a$ sampled from distribution $\mathcal{D}_a$, thus, we have $f^{Deck}=\{f^a = L(S_a)| a \in \mathbb{N}^A\}$ where $\mathbb{N}^A = \{1,\cdots,A\}$. Let us denote by $\hat{S_a}$ an empirical distribution w.r.t. sampled dataset $S_a$. 

\begin{Df}[Average OOD Robustness]
Let $f^{Deck} = \{f^a| a \in \mathbb{N}^A\}$ denote a \cdeck trained using an augmentation set ${S^A = \{S_a \sim \mathcal{D}_a| a \in \mathbb{N}^A\}}$. We define the average out-of-distribution robustness for a \cdeck w.r.t. corruption set $\mathcal{D}^C = \{\mathcal{D}_c| c \in \mathbb{N}^C\}$ as
\vspace{-0.05in}
\begin{small}
\begin{equation}
\label{avg_robust}
   Rob(\mathcal{D}^C, f^{Deck}) = \sum_{a=1}^{|A|}\sum_{c=1}^{|C|} Rob(\mathcal{D}_c, f^{a}) w_c^a,
\end{equation}
\end{small}
where $Rob(\mathcal{D}_c, f^{a}) = \E_{(x_c,y)\sim \mathcal{D}_c} \left[ \inf_{f^a(x_c')\neq y} d(x_c', x_c)\right]$ with $x_c'$ being a perturbed version of $x_c$, $d$ corresponds to a distance metric, and $w_c^a$ denotes the probability of $f^{Deck}$ gating module selecting the classifier $f^a$ to make a prediction on test data coming from corruption type $c$. 
\end{Df}
This definition refers to the expectation of the distance to the closest misclassified corrupted sample for a given test sample. Note that this is a stronger notion of robustness then the generalization error corresponding to a corrupted data distribution. 
Having this definition, our goal is to provide a lower bound on the average OOD robustness of $f^{Deck}$ and show that the use of domain-adaptive classifier ensemble achieves a better OOD robustness compared to the case where we use just a single classifier $f^a$. 
To understand this quantity better, we derive the following decomposition (see Appendix~\ref{appendix:theory}):
\begin{small}
\begin{equation*}
Rob(\mathcal{D}^C, f^{Deck}) \geq \sum_{a,c} w_c^a[ \underbrace{Rob(\hat{S_a}, f^{a})}_{(a)}
         -\underbrace{\|Rob(\mathcal{D}_a, f^{a}) - Rob(\hat{S_a}, f^{a})\|}_{(b)}
         -\underbrace{\|Rob(\mathcal{D}_c, f^{a})- Rob(\mathcal{D}_a, f^{a})\|}_{(c)}].    
\end{equation*}
\end{small}
This shows that the average OOD robustness can be bounded from below in terms of the following three error terms for a classifier-corruption pair weighted by their selection probabilities: (a) empirical robustness, (b) generalization gap, and (c) out-of-distribution-shift. 
This implies that in order to bound the average OOD robustness, we need to bound both the generalization gap and the OOD-shift. Next, we provide a bound on the OOD-shift penalty that is independent of the classifier $f^a$ and is only related to the closeness of the augmented data distribution and corrupted data distribution. The closeness is defined in terms of Wasserstein distance $W(\cdot, \cdot)$ (see Definition \ref{wasserstein} in Appendix~\ref{appendix:theory}).

\begin{Thm}[Average OOD-Shift Bound]
\label{dist-shift}
For any \cdeck, the average OOD-shift (i.e., $\text{ADS} = \sum_{a,c} {w_c^a \|Rob(\mathcal{D}_c, f^a) - Rob(\mathcal{D}_a, f^a) \|}$) can be bounded as follows
$ADS \leq \sum_{a=1}^{|A|}\sum_{c=1}^{|C|} {w_c^a \times W(\mathcal{D}_c, \mathcal{D}_a)}$.
\end{Thm}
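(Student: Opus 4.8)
The plan is to prove the bound term-by-term. Since the weights $w_c^a$ are nonnegative and the outer double sum is structurally identical on both sides of the claimed inequality, it suffices to establish, for each classifier-corruption pair $(a,c)$, the single-pair estimate $\|Rob(\mathcal{D}_c, f^a) - Rob(\mathcal{D}_a, f^a)\| \leq W(\mathcal{D}_c, \mathcal{D}_a)$; multiplying by $w_c^a \geq 0$ and summing over $a$ and $c$ then yields $\text{ADS} \leq \sum_{a,c} w_c^a W(\mathcal{D}_c, \mathcal{D}_a)$ immediately. So the entire content is in the single-pair bound.

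The key structural observation is that the inner robustness quantity is the expectation of a distance-to-set function. Fixing $f^a$ and a label $y$, set $g^a(x,y) = \inf_{f^a(x') \neq y} d(x', x)$, the distance from $x$ to the nearest point misclassified by $f^a$, so that $Rob(\mathcal{D}_c, f^a) = \E_{(x,y) \sim \mathcal{D}_c}[g^a(x,y)]$ and likewise for $\mathcal{D}_a$. First I would verify that $x \mapsto g^a(x,y)$ is $1$-Lipschitz with respect to $d$ for every fixed $y$: this is the standard fact that the distance to a fixed set is $1$-Lipschitz, and follows from the triangle inequality $d(x', x_1) \leq d(x', x_2) + d(x_2, x_1)$ by taking the infimum over misclassified $x'$ and then symmetrizing in $x_1, x_2$.

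Given this Lipschitz property, I would invoke Kantorovich--Rubinstein duality, which characterizes $W(\mathcal{D}_c, \mathcal{D}_a)$ as the supremum of $|\E_{\mathcal{D}_c}[h] - \E_{\mathcal{D}_a}[h]|$ over all $1$-Lipschitz test functions $h$. Because $g^a$ is an admissible test function, the difference $\|Rob(\mathcal{D}_c, f^a) - Rob(\mathcal{D}_a, f^a)\| = |\E_{\mathcal{D}_c}[g^a] - \E_{\mathcal{D}_a}[g^a]|$ is at most $W(\mathcal{D}_c, \mathcal{D}_a)$, which is exactly the single-pair bound. Notably, the estimate is independent of how $f^a$ was trained, precisely because $g^a$ is $1$-Lipschitz regardless of the underlying classifier—this is what makes the final bound on $\text{ADS}$ classifier-free and reduces it to pure closeness of augmented and corrupted distributions.

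The main obstacle is the label coordinate: $g^a$ is Lipschitz in $x$ only for each fixed $y$, whereas $W(\mathcal{D}_c, \mathcal{D}_a)$ is taken over the joint distribution on $(x,y)$, so I must ensure the per-label Lipschitz property lifts to a genuine $1$-Lipschitz function on the joint space under the metric underlying $W$. Here I would rely on the modeling assumption already implicit in the setup—that a corruption transforms $x$ into $x_c$ while preserving its label—so that $\mathcal{D}_c$ and $\mathcal{D}_a$ share the same label marginal. Equivalently, I would equip the joint space with the metric $\tilde{d}((x,y),(x',y')) = d(x,x')$ when $y = y'$ and $\tilde{d}((x,y),(x',y')) = +\infty$ otherwise, so that admissible couplings transport mass only between matching labels and the per-label Lipschitz constant coincides with the joint Lipschitz constant. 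Under this convention the duality step goes through verbatim; care is only needed to state this metric (or the label-preserving assumption) explicitly so that the $W(\mathcal{D}_c, \mathcal{D}_a)$ appearing in the conclusion is understood as the label-respecting Wasserstein distance.
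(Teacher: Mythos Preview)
Your proof is correct. The paper's own proof is a one-line citation to Theorem~1 of Sehwag et~al.\ (2021), which establishes the single-pair bound $\|Rob(\mathcal{D}_c, f^a) - Rob(\mathcal{D}_a, f^a)\| \leq W(\mathcal{D}_c, \mathcal{D}_a)$ via the \emph{primal} (coupling) formulation: take the optimal transport $J^*(y)$ between $\mathcal{D}_c|y$ and $\mathcal{D}_a|y$, apply the triangle inequality $d(x', x_c) \leq d(x', x_a) + d(x_a, x_c)$ inside the infimum defining robustness, and split the resulting expectation into $Rob(\mathcal{D}_a, f^a)$ plus the transport cost. You instead take the \emph{dual} route via Kantorovich--Rubinstein, noting that the distance-to-misclassified-set function $g^a(\cdot, y)$ is $1$-Lipschitz and hence an admissible test function. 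The two arguments are equivalent and equally short; yours makes more explicit \emph{why} the bound is classifier-free (the Lipschitz constant is $1$ regardless of $f^a$), while the coupling argument avoids invoking duality and stays entirely within the definitions. Your handling of the label coordinate is exactly right and matches the paper: the $W$ in the statement is the \emph{conditional} Wasserstein distance of Definition~\ref{wasserstein}, which already couples only within each label class under the stated assumption of identical label marginals, so the label-respecting metric you introduce is precisely the intended one.
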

\begin{proof}
This result can be proved by applying Theorem 1 in \citep{sehwag2021improving} to ADS. 
\end{proof}

\noindent\textbf{Key insights.} Theorem \ref{dist-shift} provides some key insights into the OOD robustness of classifiers trained on augmented datasets. First, unlike the generalization gap, the OOD-shift does not converge to zero with more augmentation data. This imposes a fundamental limit on the OOD robustness in terms of the distance between augmented train data distribution and corrupted test data distribution.
Second, having diverse augmentations is critical to improving the OOD robustness. Also, it highlights that existing solutions trained with a single augmentation scheme might just be getting lucky or overfitting to the corrupted test data.
Finally, the domain-adaptive $\cdeck$ with a suitable gating function is provably better than using a single classifier because it can achieve the minimum conditional Wasserstein distance (or best achievable OOD robustness) over given augmentation-corruption pairs.

\section{Limitations and future directions}
\label{sec:limitation}
In this paper, we showed that model compression and high robustness (and accuracy) are not necessarily conflicting objectives. 
We found that compression, if done properly (e.g., using ``lottery ticket-style'' objectives), can improve the OOD robustness compared to a non-compressed model. 
Leveraging this finding, we proposed a simple domain-adaptive ensemble of \cnets that outperformed existing SOTA in terms of the clean accuracy and the OOD robustness (at a fraction of the original memory usage).   
Our results are consistent with past results in that we also show that the use of test accuracy alone to evaluate the quality/deployability of a compressed model in the wild is not sufficient---one needs to adopt harder metrics such as OOD robustness. However, as opposed to the existing works in this direction, we present a construction that satisfies these ``harder'' requirements. 

There are still many interesting questions that remain to be explored. 
First, while we were able to produce \cnets it remains unclear (i) why only certain pruning strategies were able to produce them and (ii) why introducing compression can improve ``effective robustness'' \citep{taori2019robustness} (e.g. Conv and VGG19 BP and EP models in Figure~\ref{fig:card-net-all-plot}).
Second, the spectral relationship of train and test data (as considered in this work) is not the only interaction determining the performance of a compressed model. 
It will be worthwhile to take a more holistic approach that also takes spectral behavior of the compressed model (e.g., using intermediate features) into account, which could possibly benefit from using \cnets compressed via different strategies when building a ``winning hand''. 
Third, we only derived an upper bound on the  amount of overparameterization needed to approximate a target dense network in our theoretical analysis; it will also be interesting to explore a lower bound (a necessary condition) on the same which may indicate scenarios where the proposed approach will not work (e.g., underparameterized NNs). 
Fourth, ``lottery ticket-style'' models in theory can be found more efficiently, which was not our focus but is a valuable future direction. 
Finally, achieving the theoretical memory savings obtained from \cnets (reported in this paper) would require their implementation on specialized hardware. 
We hope that our results will help researchers better understand the limits of compressed neural nets, and motivate future work on \cnets and their applications to areas where DL struggles currently due to its parameter-inefficiency and OOD brittleness.


\newpage

\section*{Acknowledgements} 
We would like to thank the reviewers for their
valuable discussion during the rebuttal period that resulted in improved clarity and presentation of our research.
This work was performed under the auspices of the U.S. Department of Energy by the Lawrence Livermore National Laboratory under Contract No. DE-AC52-07NA27344 and LLNL LDRD Program Project No. 20-ER-014 (LLNL-CONF-823802).

\bibliography{winning_hand}
\bibliographystyle{plainnat} 

\newpage
\renewcommand\appendixpagename{\Large Supplementary Material: {\em A Winning Hand}: Compressing Deep Networks Can Improve Out-Of-Distribution Robustness}

\appendix
\appendixpage

\title{The Tri}
\maketitle

Here we provide a brief outline of the appendices. 
In Appendix \ref{sec:background}, we provide details on relevant past works. 
In Appendix \ref{appendix:experiment-details}, we discuss our experimental setting and relevant hyperparameters. 
In Appendix \ref{appendix:init-experiments}, we provide additional experiments with \init methods and, in part, show that the robustness of the EP method is not only due to binarization but also due to the specific pruning strategy. 
In Appendix \ref{appendix:all_heatmaps}, we provide Fourier heatmaps for additional pruning rates and architectures. 
In Appendix \ref{appendix:n_shot}, we provide additional Fourier heatmap results on comparing the rewinding-based schemes with the traditional pruning schemes. 
In Appendix \ref{appendix:card-deck}, we provide extensive tables for \cnet and \cdeck experiments performed in Section~\ref{sec:sota}. 
In Appendix \ref{appendix:theory}, we provide remaining proof details for our theoretical justification of our \cdeck approach. We show the universal approximation power of \cdecks and prove that \cdeck with a suitable gating function is provably better than using a single classifier.

\section{Background}
\label{sec:background}

\subsection{Accuracy preserving model compression}
\label{sec:compression}
{Two popular approaches for model compression are: pruning and quantization. Here, we discuss these approaches and their effects on accuracy.}

\noindent\textbf{Pruning.}
Neural network pruning removes weights \citep{lecun1990optimal} or larger structures like filters \citep{li2016pruning} from neural networks to reduce their computational burden \citep{han2015learning,he2017channel} and potentially improve their generalization \citep{wen2016learning,louizos2017learning}. As the performance of DNNs has continued to improve with increasing levels of overparameterization \citep{zhang2016understanding}, production DNNs have grown larger \citep{krizhevsky2012imagenet,brown2020language}, and the need to broadly deploy such models has amplified the importance of compression methods like pruning \citep{han2015learning,han2015deep}.

In modern networks, pruning the smallest magnitude weights after training then fine-tuning (FT) to recover accuracy lost from the pruning event is surprisingly effective; when the pruning is done iteratively rather than all at once, this approach enables a 9x compression ratio without loss of accuracy \citep{han2015learning}. Gradual magnitude pruning (GMP) performs such iterative pruning throughout training rather than after training \citep{narang2017exploring, zhu2017prune}, recovering accuracy lost from pruning events as training proceeds, and matches or exceeds the performance of more complex methods \citep{gale2019state}.

Another form of magnitude pruning stems from work on the lottery ticket hypothesis (LTH), which posits that the final, sparse subnetwork discovered by training then pruning can be rewound to its state at initialization \citep{frankle2018lottery} or early in training \citep{frankle2020linear}, then trained in isolation to be comparably accurate to the trained dense network.  The associated pruning approach that iteratively trains the network, rewinds the weights (and learning rate schedule) to their values early in training, then trains the subnetwork is referred to here as LTH. A simpler version of this algorithm, learning rate rewinding (LRR) \citep{renda2020comparing}, only rewinds the learning rate schedule (not the weights) and achieves a state-of-the-art accuracy-efficiency frontier while being less complex than other competitive approaches  \citep{zhu2017prune,molchanov2017variational,frankle2018lottery,he2018amc}. LRR has been shown to offer small improvements to accuracy with not-too-high compression ratio  \citep{renda2020comparing}. The authors in \citep{venkatesh2020calibrate} proposed calibration mechanisms to find more effective lottery tickets.

Building on the lottery ticket hypothesis, the edgepopup (EP) algorithm introduced a way to find sparse subnetworks at initialization that achieve good performance without any further training \citep{ramanujan2019whats}. \citet{diffenderfer2021multiprize} introduced a similar pruning approach, biprop (BP),  which also performs weight binarization.

\noindent\textbf{Binarization.}
Typical post-training schemes have not been successful in binarizing pretrained models with or without retraining to achieve reasonable accuracy. Most existing post-training works~\citep{han2015deep, zhou2017incremental} are limited to ternary weight quantization.
To overcome this limitation, there have been several efforts to improve the performance of binary neural network (BNN) training. This is challenging due to the discontinuities introduced by the binarization, which makes back-propogation difficult. 
Binaryconnect~\citep{courbariaux2015binaryconnect} first showed how to train networks with binary weights within the familiar back-propagation paradigm. Unfortunately, this early scheme resulted in a significant drop in accuracy compared to its full precision counterparts. To improve performance, XNOR-Net~\citep{rastegari2016xnornet} proposed adding a real-valued channel-wise scaling factor to improve capacity. Dorefa-Net~\citep{zhou2016dorefa} extended XNOR-Net to accelerate the training process via quantized gradients.
ABC-Net~\citep{lin2017towards} improved performance by using more weight bases and activation bases at the cost of increased memory.

Notably, one can exploit the complementary nature of pruning and binarization to combine their strengths.
For example, \citet{diffenderfer2021multiprize} produced an algorithm for finding multi-prize lottery tickets (MPTs): sparse, binary subnetworks present at initialization that don't require training.  

\paragraph{Pruning algorithm framework.} The following pruning algorithm framework, inspired by those in \citep{renda2020comparing,wang2021emerging}, covers traditional-through-emerging pruning methodologies. Specifically, we define the trained subnetwork created by one pruning-retraining cycle (i.e., one pruning iteration) as:
\begin{equation}
\label{prune-method}
W_{\text{sparse}} = F_1(W_{k_1}; \mathcal{D}) \odot F_2(W_i; \mathcal{D}, \mathcal{M}, k_2),
\end{equation}
where 
$\mathcal{D}$ denotes the training dataset, 
$W_i$ denotes the weight vector at the start of training iteration $i$,\footnote{During training, $i < T$ for most pruning approaches, where $T$ is the default number of training iterations. However, fine-tuning trains for an additional set of iterations after pruning takes place at iteration $T$. Additionally, rewinding-based lottery ticket approaches (when accounting for training done by $F_2$) use $(n+1)T-nr$ training iterations, where $n$ is the number of pruning iterations or ``shots'' in an n-shot pruning procedure, and $r$ is the iteration rewound to after each pruning iteration (note that when $r=0$, the network is rewound to its state from initialization after each pruning iteration and $(n+1)T$ total iterations are required by this approach).}
$F_1$ represents the function that finds and returns the weight-masking vector $\mathcal{M}$,
$F_2$ represents the function that retrains the weights after $\mathcal{M}$ is found,
$k_i$ is the earliest training iteration that $F_i$ requires information from (e.g., weight-vector or learning-rate values),
$F_1$ and $F_2$ are each applied at the beginning of iteration $k_1$,
and $\odot$ is the Hadamard (element-wise) product. 
Using this, the pruning paradigms and representative techniques from these categories considered in this paper are as follows: 
\begin{itemize}
    \item \textbf{Traditional} $k_1=k_2$ and $F_2 \neq I$ (identity function). Particular techniques:
        \begin{itemize}
            \item Fine-Tuning (FT) \citep{han2015learning}: \\ 
            \null \quad  $W_{\text{sparse}} = F_1(W_T) \odot F_2(W_T;                      \mathcal{D}, \mathcal{M}, T)$
            \item Gradual Magnitude Pruning (GMP) \citep{zhu2017prune}: \\ 
            \null \quad  $W_{\text{sparse}} = F_1(W_t) \odot F_2(W_t;                       \mathcal{D}, \mathcal{M}, t)$, where $t \in [t_1, t_2, ..., t_n]$ and $t_n<T$
        \end{itemize}
    \item \textbf{Rewinding-based Lottery Ticket} $k_1 = aT-(a-1)r$ and $k_2 = r$, where $a \in [1\ ..\ n]$ and $r\ll T$. Particular techniques:
        \begin{itemize}
            \item Weight Rewinding (LTH) \citep{frankle2018lottery,frankle2020linear}: \\ 
            \null \quad $W_{\text{sparse}} = F_1(W_{aT-(a-1)r}) \odot F_2(W_r;                     \mathcal{D}, \mathcal{M}, r)$
            \item Learning Rate Rewinding (LRR) \citep{renda2020comparing}: \\ 
            \null \quad  $W_{\text{sparse}} = F_1(W_{aT-(a-1)r}) \odot F_2(W_{aT-(a-1)r};                     \mathcal{D}, \mathcal{M}, r)$
        \end{itemize}
    \item \textbf{Initialization-based (Strong) Lottery Ticket} $k_1=k_2=0$ and $F_2 =I$. Particular techniques:
        \begin{itemize}
            \item Edgepopup (EP) \citep{ramanujan2019whats}:  \\ 
            \null \quad             $W_{\text{sparse}} = F_1(W_{0,\;                                       \text{binary}}; \mathcal{D}) \odot I(W_{0,\;                            \text{binary}})$
            \item Biprop (BP) \citep{diffenderfer2021multiprize}: \\ 
            \null \quad  $W_{\text{sparse}} = F_1(W_0; \mathcal{D}) \odot I( W_{0,\;                            \text{binarized by biprop}})$
        \end{itemize}
\end{itemize}

Note that GMP, LTH, and LRR are all iterative. Further, since rewinding schemes apply $F_1$ and $F_2$ at the beginning of iterations $k_1=aT-(a-1)r$, $a \in [1\ ..\ n]$, it's true that $k_1 > k_2 = r$, so $F_2$ needs to store information from iteration $k_2=r$ in order to (at $k_1$) perform the training iterations that determine $W_i,\ i\geq T$.
As opposed to traditional and rewinding schemes, strong lottery ticket \citep{ramanujan2019whats} schemes do not require any weight training before or after pruning---a performant network is found at initialization via $F_1$. In other words, learning occurs simply by pruning a randomly initialized neural network. 
Furthermore, by design BP performs binarization of the weights to reduce the memory footprint. We note that the precision of the weights in networks trained using EP maintain the same precision as the randomly initialized weights. Hence, EP can also be used to identify binarized networks by randomly initializing the weights to binary values. To take advantage of additional compression, in our experiments with EP the mask $\mathcal{M}$ is learned from a binary-initialized weight vector $W_{0,\;\text{binary}}$. As BP performs binarization during pruning, a full-precision weight vector $W_0$ is used when finding $\mathcal{M}$.
In all of these methods, we make use of global unstructured pruning which allows for different pruning percentages at each layer of the network. 

\subsection{Accuracy preserving robust training}
\label{sec:A-R}
While DNN models show impressive generalization in I.I.D. data scenarios~\citep{tan2021efficientnetv2,foret2020sharpness}, the robustness of such models on OOD data (e.g., common corruptions -- blurring from camera movement, or noise from low-lighting conditions) is critical to the successful deployment of DL in the wild. To evaluate  performance in the presence of such common corruptions, \citet{hendrycks2019benchmarking} introduced the CIFAR-10-C dataset, which comprises validation images from CIFAR-10 \citep{krizhevsky2009learning} that were exposed to $15$ diverse corruption types applied at $5$ severity levels.  

To achieve high OOD robustness and accuracy, AugMix \citep{hendrycks2019augmix} creates data augmentations at training time by composing randomly-selected augmentation operations from a diverse set, which notably excludes augmentations overlapping with those used to create CIFAR-10-C. Additionally, AugMix utilizes a Jensen-Shannon Divergence consistency loss term to match the predictions between different augmentations of a given image.
This approach is expanded on by DeepAugment~\citep{hendrycks2020many}, which inputs clean images to a pretrained image-to-image model, corrupts this model's weights and activations with various operations that distort the typical forward pass, then uses the output images as augmented data. AdversarialAugment (AdA) builds on DeepAugment by generating the weight perturbations performed on the image-to-image models via adversarial training \cite{calian2021defending}. Also, when used with an appropriately selected perturbation radius and distance metric, adversarial training can serve as a strong baseline against common corruptions \citep{gowal2020uncovering, kireev2021effectiveness}.

Notably, the state-of-the-art in OOD robustness has historically evolved by leveraging more advanced data augmentation schemes and larger models than prior works~\cite{croce2020robustbench}. 


\subsection{Methods to design compact-accurate-robust models}
\label{previous_attempts}
Despite its critical need, efforts towards achieving model compactness, high accuracy, and OOD (natural corruption) robustness simultaneously have mostly been unsuccessful, to the best of our knowledge. 
Note that some recent works have shown successful attempts for different use cases, e.g., adversarial example robustness~\cite{wang2020achieving}, additive white noise robustness~\cite{ahmad2019can}, and domain generalization~\citep{zhang2021can}.

\citet{hooker2019compressed} analyzed traditional compression techniques \citep{zhu2017prune} and showed that pruned and quantized models have comparable accuracy to the original dense network \textit{but} are far more brittle than non-compressed models in response to small distributional changes that humans are robust to. It is well known that even non-compressed models are very brittle to the OOD shifts. The authors in \citep{hooker2019compressed} showed that this brittleness is amplified at higher levels of compression. 

\citet{liebenwein2021lost} corroborated that a pruned \citep{renda2020comparing,baykal2019sipping} model can have similar predictive power to the original one when it comes to test accuracy, while being more brittle when faced with out of distribution data points. They further showed that this phenomenon holds even when considering robust training objectives (e.g., data augmentation). Their results suggest that robustness advances discussed in Sec.~\ref{sec:A-R} may be suboptimal with model compression approaches unless OOD shifts are known at train time. 

Notably, the aforementioned papers only analyze a limited class of pruning approaches. Our findings with traditional pruning approaches are consistent with the findings of \cite{hooker2019compressed}, which involved a traditional pruning approach.
Additionally, when \citet{liebenwein2021lost} employ a lottery ticket-style pruning approach, they find pruning harms robustness more when using smaller networks, which is consistent with our \cnet hypothesis that states that the starting network must be sufficiently overparameterized.



\section{Experiment settings}
\label{appendix:experiment-details}

All codes were written in Python using Pytorch and were run on IBM Power9 CPU with 256 GB of RAM and one to two NVIDIA V100 GPUs. Publicly available code was used as the base for each pruning method for models pruned with FT and GMP\footnote{\url{https://github.com/RAIVNLab/STR}}, LTH and LRR\footnote{\url{https://github.com/facebookresearch/open_lth}}, EP\footnote{\url{https://github.com/allenai/hidden-networks}} and BP\footnote{\url{https://github.com/chrundle/biprop}}. We added functionality for global pruning in FT, GMP, EP and BP as it was not implemented in existing repositories.

ResNet-18 results for \rwd strategies, LRR and LTH, make use of regular ResNet-18 \cite{he2015deep} models while all other methods, including dense, make use of PreAct ResNet-18 \citep{he2016identity} as it provided improved performance in terms of accuracy and robustness. 

A breakdown of hyperparameters by model and pruning method is provided in Table~\ref{table:hyperparams}. As mentioned in Section~\ref{sec:experiments}, for each pruning method we used hyperparameters tuned specifically for that method. The dense Conv2/4/6/8 models used a batch size of 60, as specified in Figure 2 of the original Lottery Ticket Hypothesis paper \citep{frankle2018lottery}. 
All pruned models and the remaining dense models were trained using a batch size of 128. In the LR schedule column, \emph{Cosine} denotes cosine decay while \emph{LR160} denotes a schedule that sets the learning rate to 0.01 at epoch 80 and 0.001 at epoch 120. All models trained using SGD use a momentum of 0.9.

\begin{table}
\centering
\tiny
\setlength\tabcolsep{3 pt}
\begin{tabular}{c|ccc|cc|cc|cc|cc|cc}
\toprule
{} & \multicolumn{3}{c}{Learning Rate} & \multicolumn{2}{c}{LR Schedule} &
\multicolumn{2}{c}{Optimizer} & \multicolumn{2}{c}{Weight Decay} & \multicolumn{2}{c}{Epochs} & \multicolumn{2}{c}{Pruning Details} \\ \cmidrule(lr){2-4}\cmidrule(lr){5-6}\cmidrule(lr){7-8}\cmidrule(lr){9-10}\cmidrule(lr){11-12}\cmidrule(lr){13-14}
{} & \rotatebox{90}{Conv2} &         \rotatebox{90}{Conv4/6/8} &      \rotatebox{90}{Rest} &                      
\rotatebox{90}{Conv2/4/6/8} &         \rotatebox{90}{Rest} &      \rotatebox{90}{Conv2/4/6/8} &         \rotatebox{90}{Rest} &     
\rotatebox{90}{Conv2/4/6/8} &         \rotatebox{90}{Rest} &      
\rotatebox{90}{Conv2/4/6/8} &         \rotatebox{90}{Rest} &      
\rotatebox{90}{Conv2/4/6/8} &         \rotatebox{90}{Rest} \\
\midrule
Dense  
&   2e-4 &  3e-4 &   0.1
&   None &  LR160 
&  Adam &  SGD
&   0 &  1e-4 
&   100 &  160 
&   N/A &  N/A  \\
\midrule
FT
&   \multicolumn{2}{c}{$\leftarrow$ 0.01 $\rightarrow$} &  0.1
&   Cosine & LR160 
&  \multicolumn{2}{c|}{$\leftarrow$ SGD $\rightarrow$}
&   \multicolumn{2}{c|}{$\leftarrow$ 1e-4 $\rightarrow$}
&   \multicolumn{2}{c|}{$\leftarrow$ 200 $\rightarrow$}
&   \multicolumn{2}{c}{$\longleftarrow$ Prune at epoch 160 then fine tune 40 epochs $\longrightarrow$} \\
GMP
&   \multicolumn{2}{c}{$\leftarrow$ 0.01 $\rightarrow$} &  0.1
&   Cosine & LR160 
&  \multicolumn{2}{c|}{$\leftarrow$ SGD $\rightarrow$}
&   \multicolumn{2}{c|}{$\leftarrow$ 1e-4 $\rightarrow$}
&   \multicolumn{2}{c|}{$\leftarrow$ 160 $\rightarrow$}
&   \multicolumn{2}{c}{$\longleftarrow (s_i, t,n,\Delta t) = (0,5,105,1) \longrightarrow$}  \\
LTH
&   5e-3 &  1e-2 &  0.1
&   \multicolumn{2}{c|}{$\leftarrow$ LR160 $\rightarrow$}
&  \multicolumn{2}{c|}{$\leftarrow$ SGD $\rightarrow$}
&   \multicolumn{2}{c|}{$\leftarrow$ 1e-4 $\rightarrow$}
&   \multicolumn{2}{c|}{$\leftarrow$ 160 $\rightarrow$}
&   rewind it.: 1000, rate: 20\%  &  rewind it.: 5000, rate: 20\%   \\
LRR
&   5e-3 &  1e-2 &  0.1
&   \multicolumn{2}{c|}{$\leftarrow$ LR160 $\rightarrow$}
&  \multicolumn{2}{c|}{$\leftarrow$ SGD $\rightarrow$}
&   \multicolumn{2}{c|}{$\leftarrow$ 1e-4 $\rightarrow$}
&   \multicolumn{2}{c|}{$\leftarrow$ 160 $\rightarrow$}
&   rewind it.: 1000, rate: 20\%  &  rewind it.: 5000, rate: 20\%    \\
BP
&   \multicolumn{3}{c|}{$\longleftarrow$ 0.1 $\longrightarrow$} 
&   \multicolumn{2}{c|}{$\leftarrow$ Cosine $\rightarrow$}
&  \multicolumn{2}{c|}{$\leftarrow$ SGD $\rightarrow$}
&   \multicolumn{2}{c|}{$\leftarrow$ 1e-4 $\rightarrow$}
&   \multicolumn{2}{c|}{$\leftarrow$ 250 $\rightarrow$}
&   \multicolumn{2}{c}{$\longleftarrow$ All Epochs $\longrightarrow$} \\
EP
&   \multicolumn{3}{c|}{$\longleftarrow$ 0.1 $\longrightarrow$} 
&   \multicolumn{2}{c|}{$\leftarrow$ Cosine $\rightarrow$}
&  \multicolumn{2}{c|}{$\leftarrow$ SGD $\rightarrow$}
&   \multicolumn{2}{c|}{$\leftarrow$ 1e-4 $\rightarrow$}
&   \multicolumn{2}{c|}{$\leftarrow$ 250 $\rightarrow$}
&   \multicolumn{2}{c}{$\longleftarrow$ All Epochs $\longrightarrow$} \\
\bottomrule
\end{tabular}
\vspace{2mm}
\caption{Hyperparameters used when training dense baselines and each pruning method by model. Note that ``Rest'' refers to all other models trained in our experiments, such as VGG and ResNet.}
\label{table:hyperparams}
\end{table}

We first note details of experiments using traditional pruning methods, fine-tuning (FT) and gradual magnitude pruning (GMP). For FT models, unpruned training takes place for 160 epochs at which point pruning to the full sparsity level takes place using global magnitude pruning. After pruning, fine-tuning of the pruned network takes place over 40 epochs where the learning rate is kept at the final value after pruning at epoch 160 \citep{liu2018rethinking,renda2020comparing}.  For GMP models, the sparsity level gradually increases over the course of the training process. In our experiments, the sparsity level at training step $t$ increases in accordance with equation (1) from \citep{zhu2017prune} which we include here to interpret the GMP pruning details from Table~\ref{table:hyperparams}:
\begin{align}
    s_t = s_f + (s_i - s_f) \left( 1 - \frac{t - t_0}{n \Delta t} \right)^3, \ \text{for} \ t \in \{ t_0 + k \Delta t \}_{k=0}^n.
\end{align}
Here, $s_i$ denotes the initial sparsity level, $s_f$ denotes the final sparsity level, $n$ denotes the number of pruning steps, $t_0$ denotes the first training step where pruning is performed, and $s_t$ denotes the sparsity level at the current training step. Note that the values for $s_i$, $t_0$, $n$, and $\Delta t$ are provided in Table~\ref{table:hyperparams}. 

For rewinding methods, LTH and LRR, hyperparameters were chosen based on details from \citep{frankle2018lottery,frankle2020linear,openlth,renda2020comparing}. Notably, our rewinding-iteration choices stemmed from the hyperparameter study shown in Figure 7 of \citep{frankle2020linear}, and the fact that the small Conv models performed well when rewound to iteration 0 in \citep{frankle2018lottery}. All LTH/LRR runs were implemented using a modified version of the OpenLTH repository \citep{openlth}. 

For initialization methods, edgepopup (EP) and biprop (BP), pruning is achieved by learning a pruning mask that is applied to the randomly initialized networks weights and, in the case of BP, binarization is applied to the weights of the resulting pruned network. For EP networks, weights were initialized using the signed constant initialization from \citep{ramanujan2019whats} which offered the best performance. As an added benefit for compactness, this initialization also yields a binary weight network. For BP networks, weights were initialized using the kaiming normal initialization as in \citep{diffenderfer2021multiprize} and the biprop algorithm performs binarization during training resulting in a binary-weight network. Due to the binary weights in both the EP and BP \cnets we trained, these \cnets provided further reductions in on-device memory consumption over rewinding based pruning strategies. For both EP and BP, we used the same number of epochs for training as in \citep{diffenderfer2021multiprize}.

\section{Additional Experiments} \label{appendix:init-experiments}
\subsection{Effect of global vs. layerwise pruning in lottery ticket \init methods.} \label{appendix:layer-vs-global}
The lottery ticket \init methods analyzed in the Section~\ref{sec:experiments} were originally developed to prune a percentage of weights uniformly across all layers of the network. In contrast, global pruning methods are considered to be more flexible as they can prune some layers more heavily than others while still meeting a user-specified sparsity level for the entire network. 
By analyzing these \init methods using both layerwise and global pruning, we notice certain peculiar patterns.
Figure~\ref{fig:global-vs-layer-main-plot} provides the accuracy and robustness of models trained with BP and EP using global and layerwise pruning. 
For each model, the maximum CIFAR-10 accuracy was achieved by a layerwise pruned model at one of the six sparsity levels.
However, the globally-pruned models consistently outperform the layerwise pruned models on robustness at nearly every sparsity level.
Furthermore, the globally-pruned models typically achieve higher or comparable accuracy at higher sparsity levels, indicating that \init methods utilizing global pruning are more suitable when a high-level of sparsity is desired.

\begin{figure}[h]
    \centering
    \includegraphics[width=\textwidth,clip]{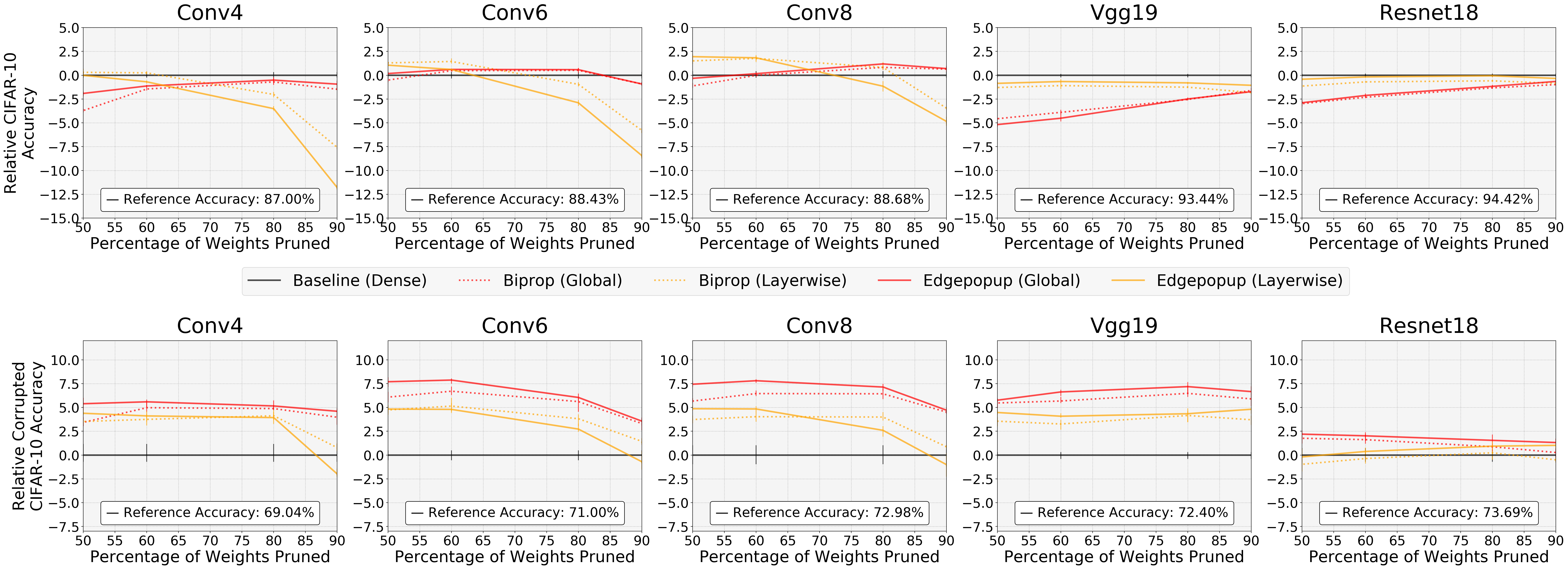}
    \caption{{\bfseries Global pruning in lottery ticket \INIT methods provides greater robustness gains}: While layerwise pruning is able to achieve the highest accuracy across all sparsity levels in \init methods, global pruning provides more significant robustness gains at all sparsity levels.}
    \label{fig:global-vs-layer-main-plot}
\end{figure}

\subsection{Comparison of full-precision-weight Edgepopup pruning with binary-weight Edgepopup pruning}
\label{appendix:EP_full_precision}
The models pruned using EP in our experiments are pruned using weights initialized from a scaled binary initialization, as specified in \cite{ramanujan2019whats}. Additionally, models pruned with BP contain binary weights regardless of the initialization used. To demonstrate that the robustness gains afforded are a feature of initialization based pruning methods and not binarization, we provide some results for full-precision initialization based pruning models. In particular, by using the kaiming normal initialization with EP the resulting network has full-precision weights. In Figure~\ref{fig:single-vs-binary-plot}, we visualize the accuracy of these models on CIFAR-10 and CIFAR-10-C. 
These experiments demonstrate the the robustness of the initialization based \cnets is not exclusive to binary weight networks as the full-precision weight networks can achieve comparable accuracy to the binary weight networks at some prune percentages. 

\begin{figure}[h]
    \centering
    \includegraphics[width=\textwidth,clip]{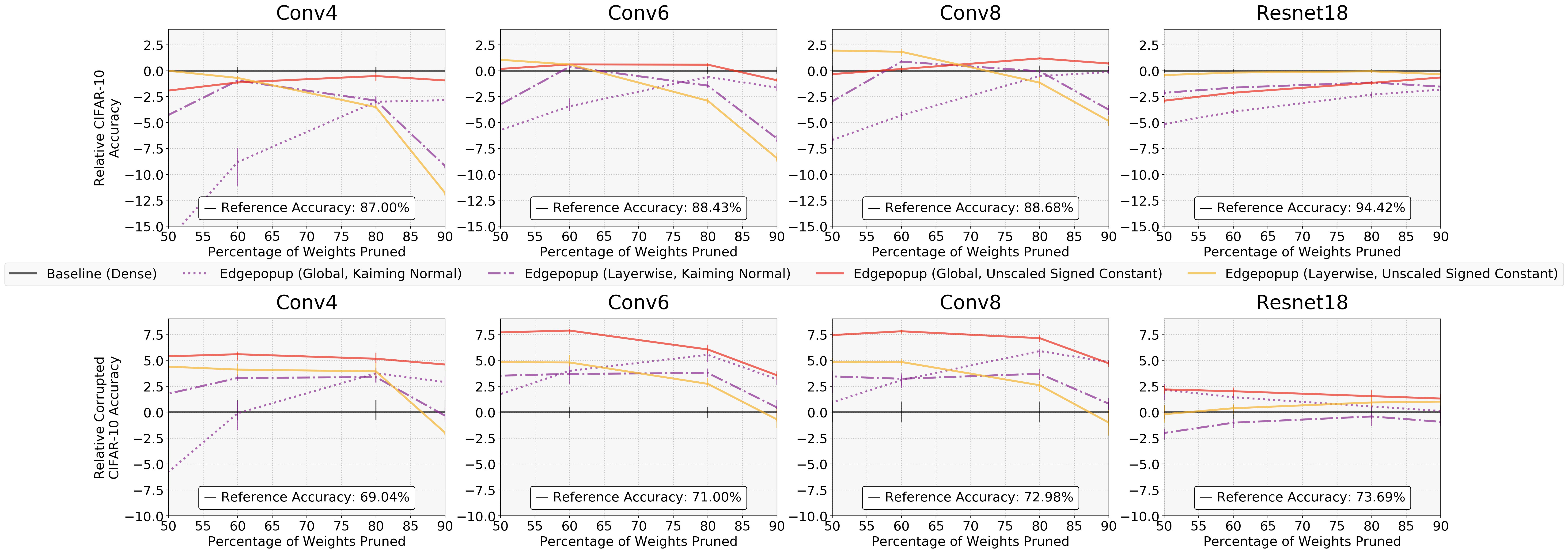}
    \caption{{\bfseries Using Full Precision weights in lottery ticket \INIT still provides robustness gains}: While initialization pruning methods with binary weights yield the greatest robustness gains over the baseline, randomly initialized networks with full-precision weights pruned using Edgepopup are capable of providing improved robustness over the dense baseline.}
    \label{fig:single-vs-binary-plot}
\end{figure}

\section{Additional heatmaps}
\label{appendix:all_heatmaps}
Here we provide additional heatmaps (varying sparsity levels) for Conv8 (see Figures~\ref{fig:fourier-conv8-90-percent} and \ref{fig:fourier-conv8-95-percent}) and for ResNet18 models (see Figures~\ref{fig:fourier-resnet18-80-percent}, \ref{fig:fourier-resnet18-90-percent} and \ref{fig:fourier-resnet18-95-percent}). By comparing the heatmaps of rewinding and initialization based pruning methods to baselines, we find that these models are more resilient to perturbations of varying severity. 

\subsection{Additional Conv8 heatmaps}
In the Conv8 models, differences in the heatmaps of initialization methods and the baseline model persist up to the highest sparsity level of 95\%, as seen in Figure~\ref{fig:fourier-conv8-95-percent}.
The top three rows in each figure provide the Fourier heatmaps for each model while the bottom three rows provide the difference to the dense baseline. 
In the difference heatmaps, blue pixels are where the compressed model has an error rate lower than the dense model and red pixels are where the compressed model has an error rate higher than the dense model.

\begin{figure}[h]
\centering
  \begin{tabular}{@{}c@{}}
    \includegraphics[width=\textwidth]{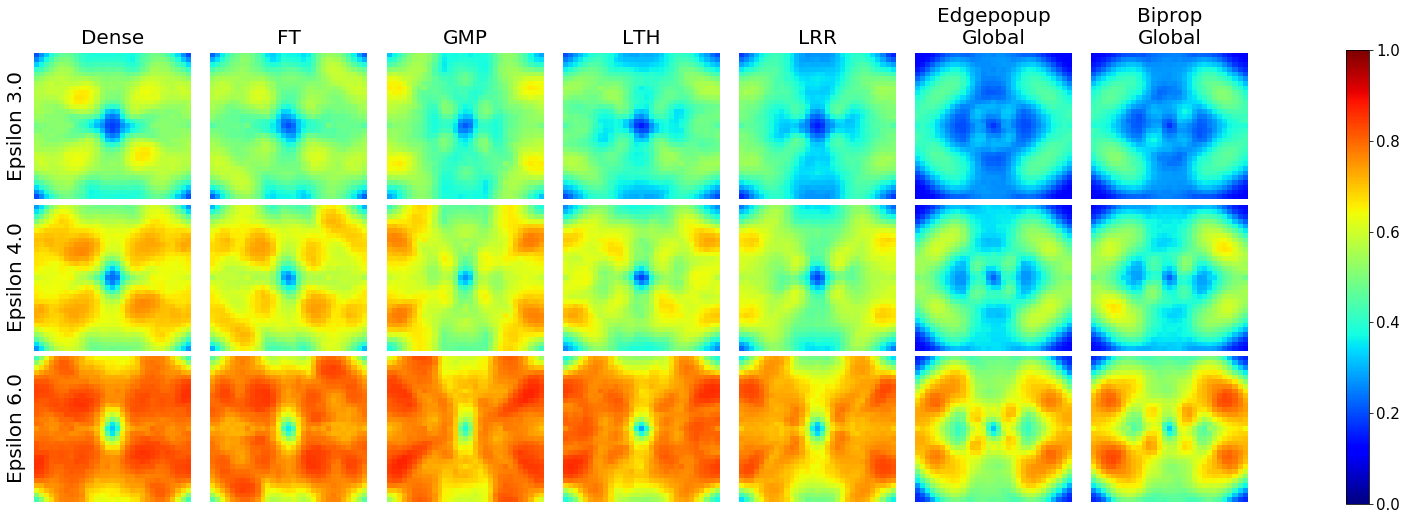} \\
    \includegraphics[width=\textwidth]{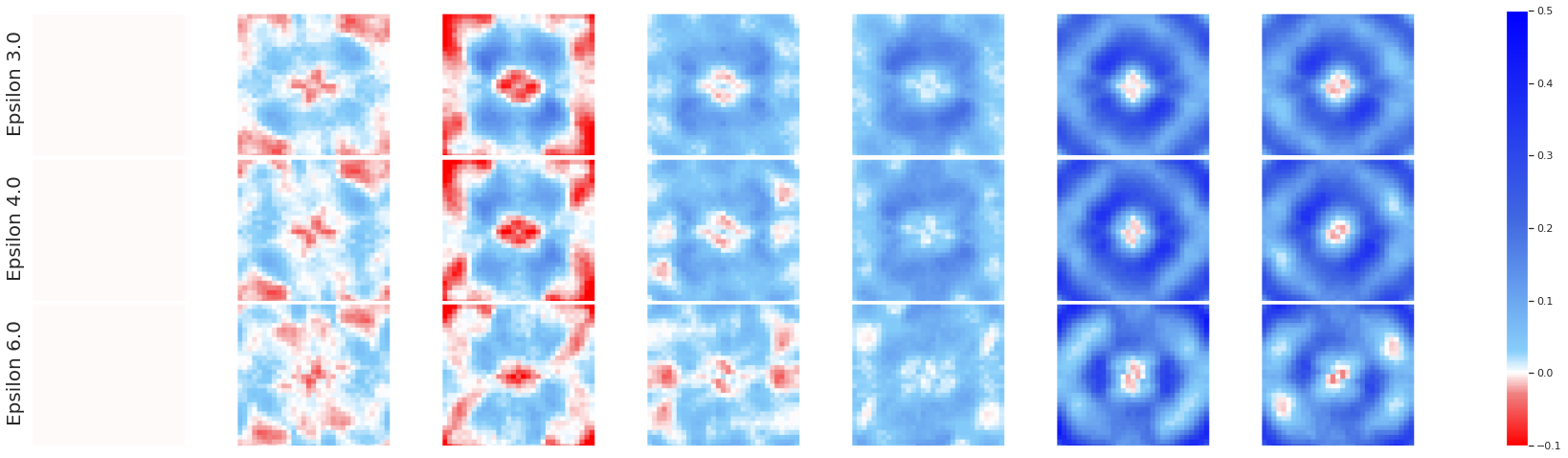}
  \end{tabular}
  \caption{{\bfseries Visualizing the response of compressed models to perturbations at different frequencies}: The top three rows are Fourier heatmaps for error rate of Conv8 models trained on CIFAR-10 with 90\% of weights pruned. The bottom three rows are difference to the baseline with blue regions indicating lower error rate than baseline.}
  \label{fig:fourier-conv8-90-percent}
\end{figure}

\begin{figure}[h]
\centering
  \begin{tabular}{@{}c@{}}
    \includegraphics[width=\textwidth]{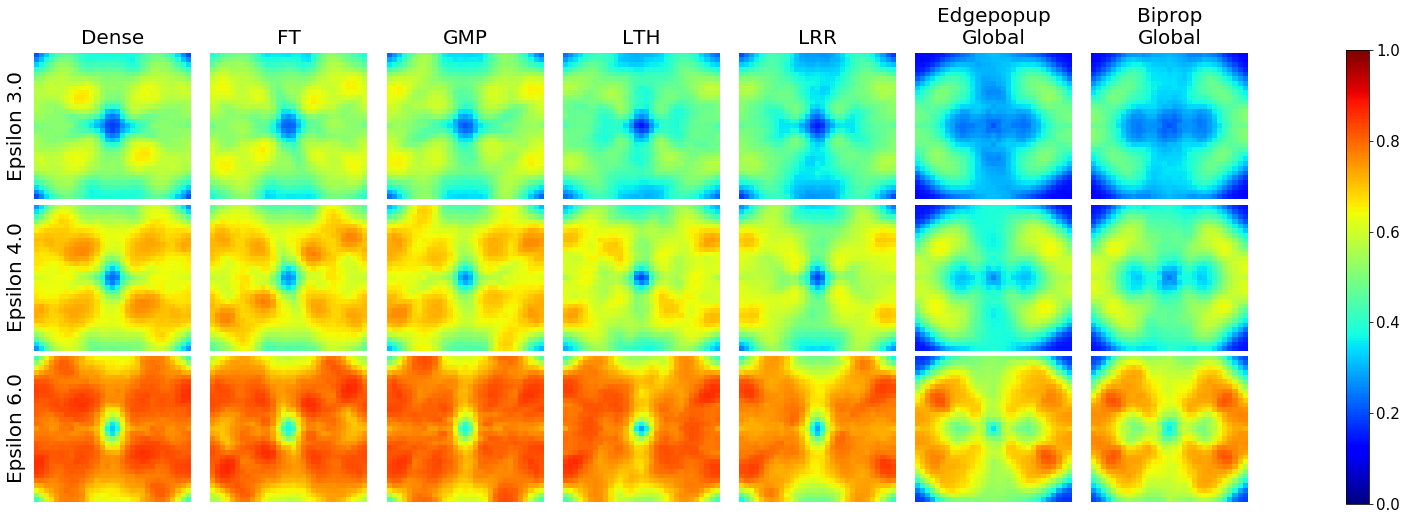} \\
    \includegraphics[width=\textwidth]{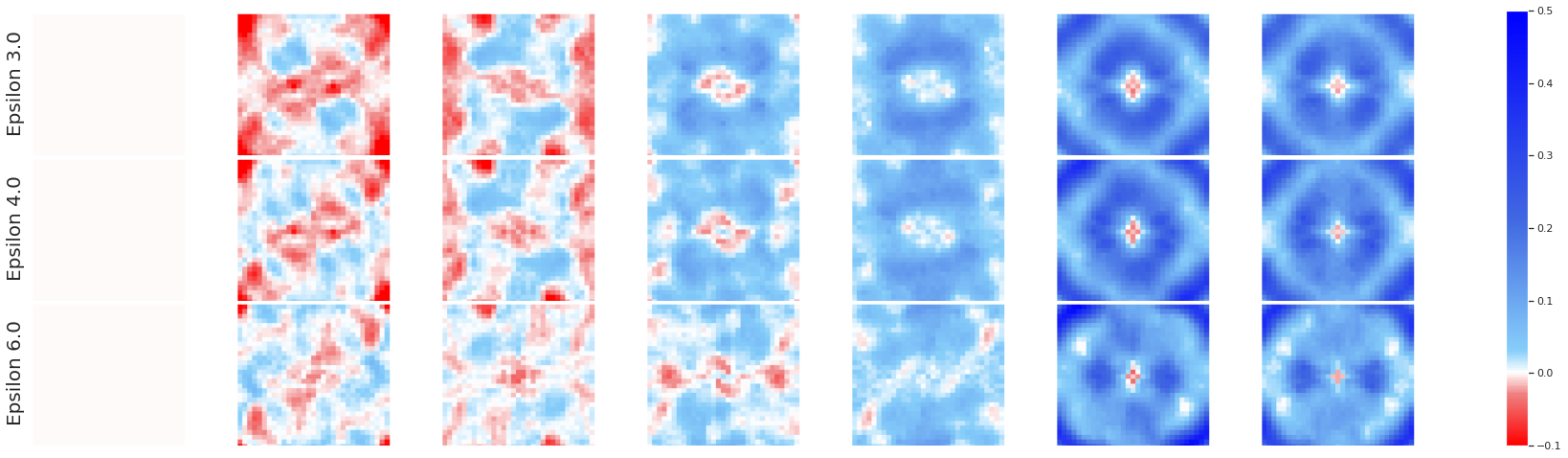}
  \end{tabular}
  \caption{{\bfseries Visualizing the response of compressed models to perturbations at different frequencies}: The top three rows are Fourier heatmaps for error rate of Conv8 models trained on CIFAR-10 with 95\% of weights pruned. The bottom three rows are difference to the baseline with blue regions indicating lower error rate than baseline.}
  \label{fig:fourier-conv8-95-percent}
\end{figure}

\subsection{ResNet-18 heatmaps}
Here we provide Fourier error rate heatmaps for the ResNet-18 architecture trained using different pruning methods. 
As in the Conv8 heatmap figures, we include heatmaps for a trained dense ResNet-18 model for reference and the difference heatmaps clearly conveying the difference of each compression method to the baseline.

\begin{figure}[h]
\centering
  \begin{tabular}{@{}c@{}}
    \includegraphics[width=\textwidth]{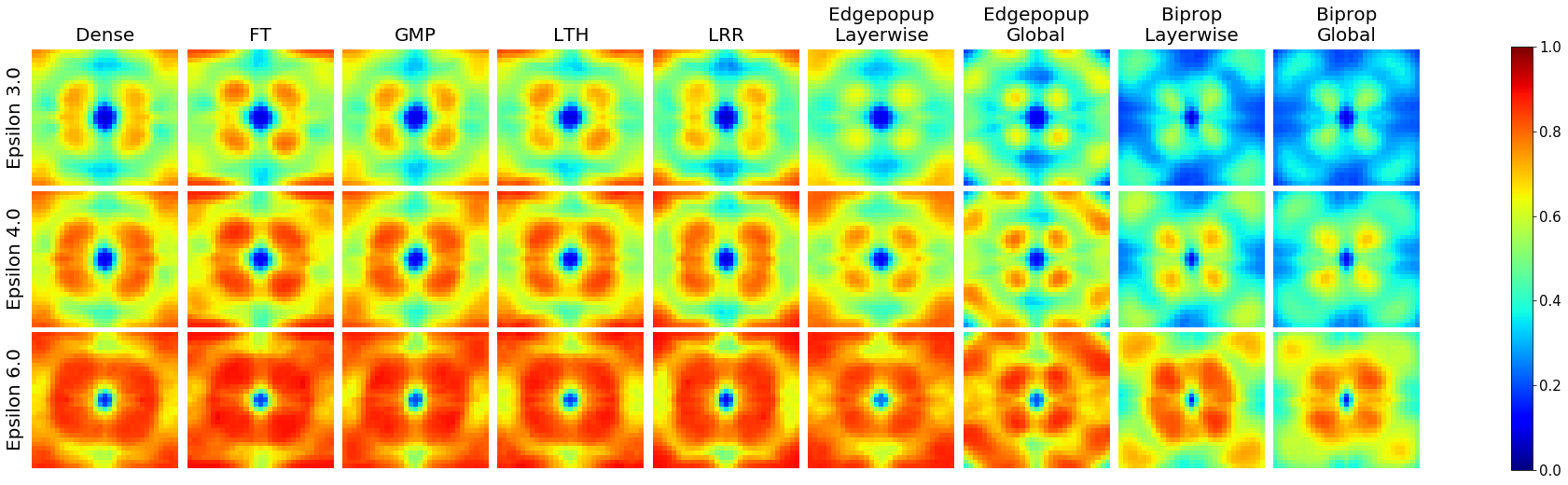} \\
    \includegraphics[width=\textwidth]{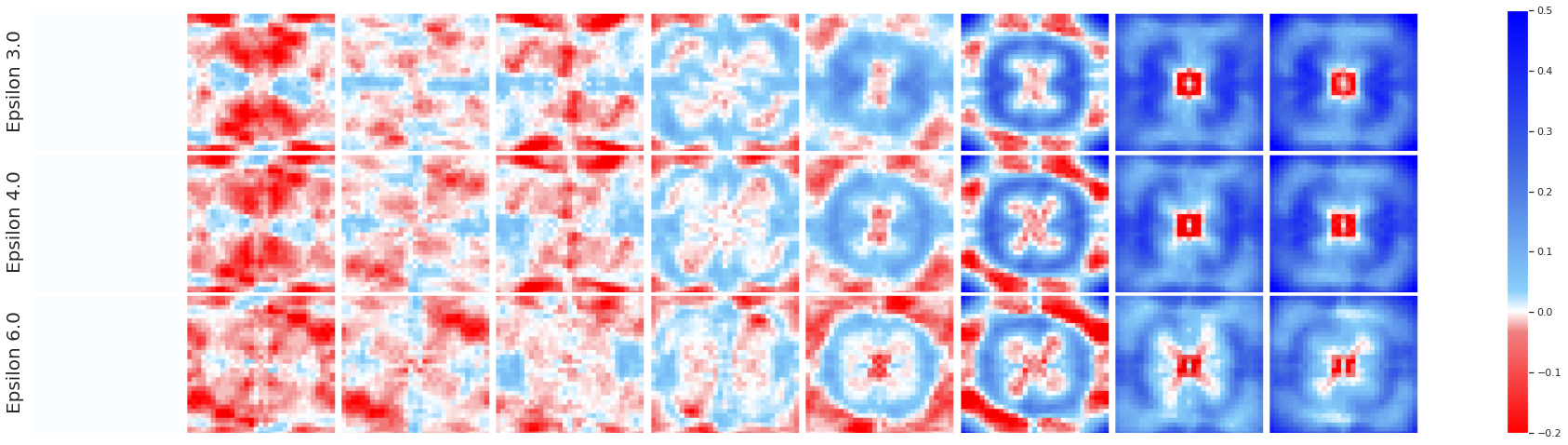}
  \end{tabular}
  \caption{{\bfseries Visualizing the response of compressed models to perturbations at different frequencies}: The top three rows are Fourier heatmaps for error rate of ResNet-18 models trained on CIFAR-10 with 80\% of weights pruned. The bottom three rows are difference to the baseline with blue regions indicating lower error rate than baseline.}
  \label{fig:fourier-resnet18-80-percent}
\end{figure}

\begin{figure}[h]
\centering
  \begin{tabular}{@{}c@{}}
    \includegraphics[width=\textwidth]{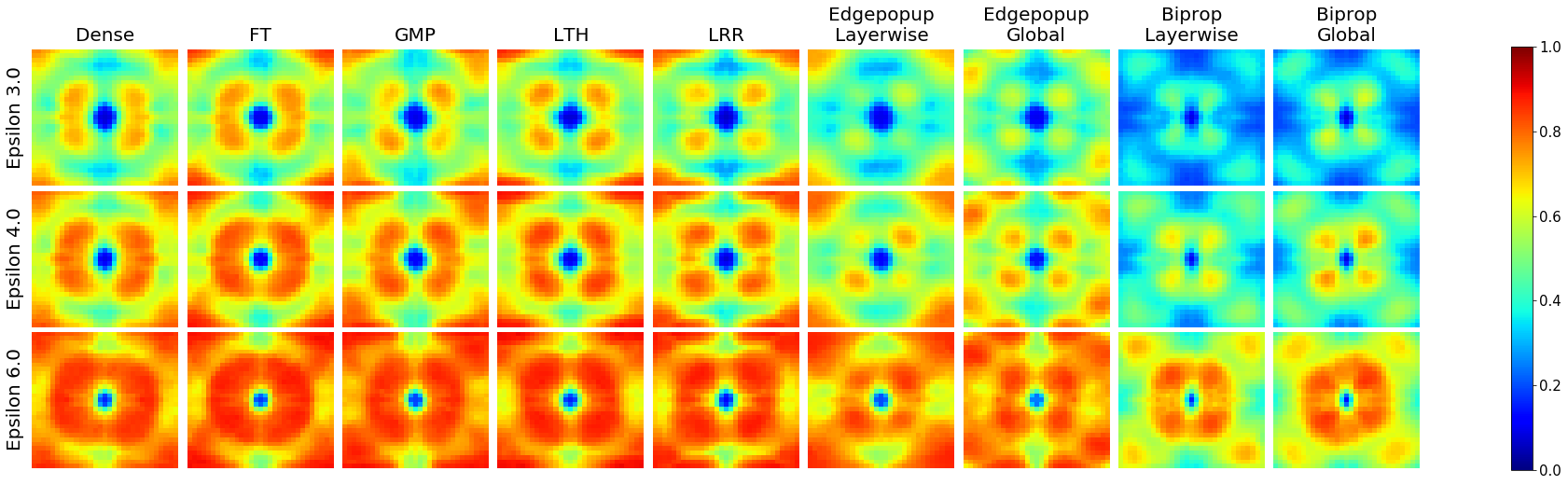} \\
    \includegraphics[width=\textwidth]{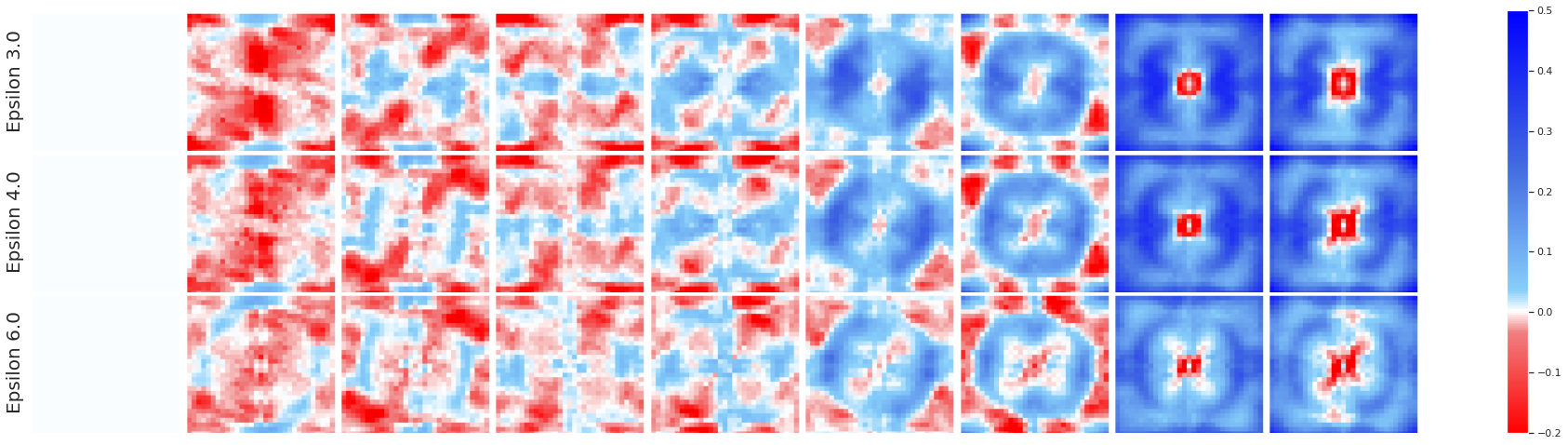}
  \end{tabular}
  \caption{{\bfseries Visualizing the response of compressed models to perturbations at different frequencies}: The top three rows are Fourier heatmaps for error rate of ResNet-18 models trained on CIFAR-10 with 90\% of weights pruned. The bottom three rows are difference to the baseline with blue regions indicating lower error rate than baseline.}
  \label{fig:fourier-resnet18-90-percent}
\end{figure}

\begin{figure}[h]
\centering
  \begin{tabular}{@{}c@{}}
    \includegraphics[width=\textwidth]{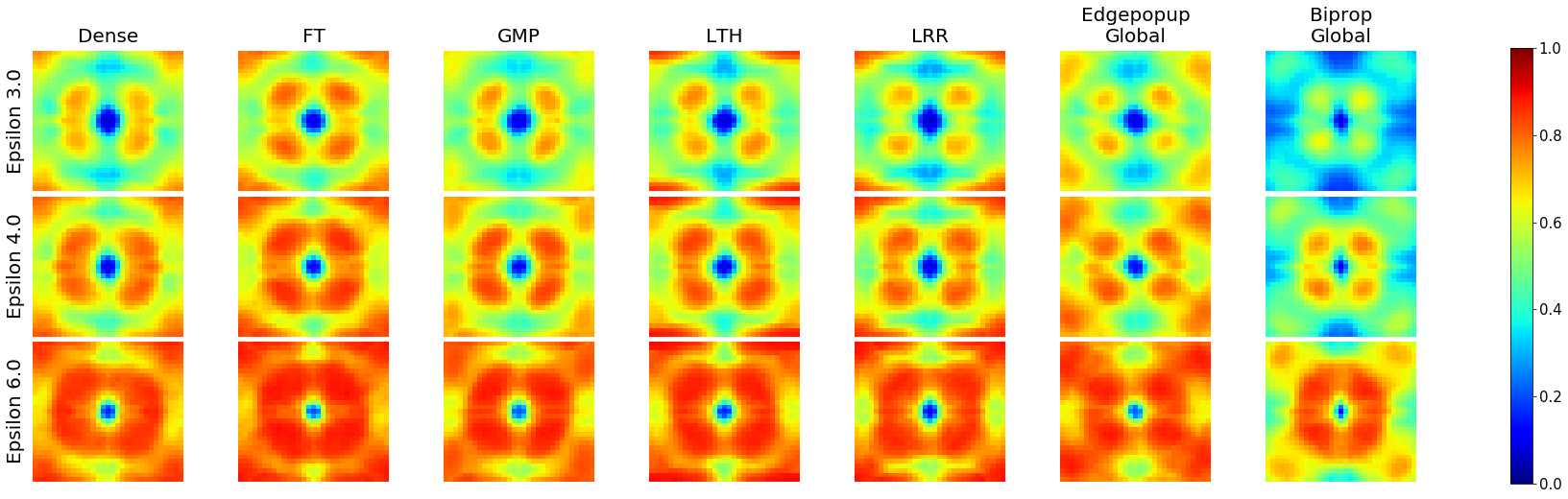} \\
    \includegraphics[width=\textwidth]{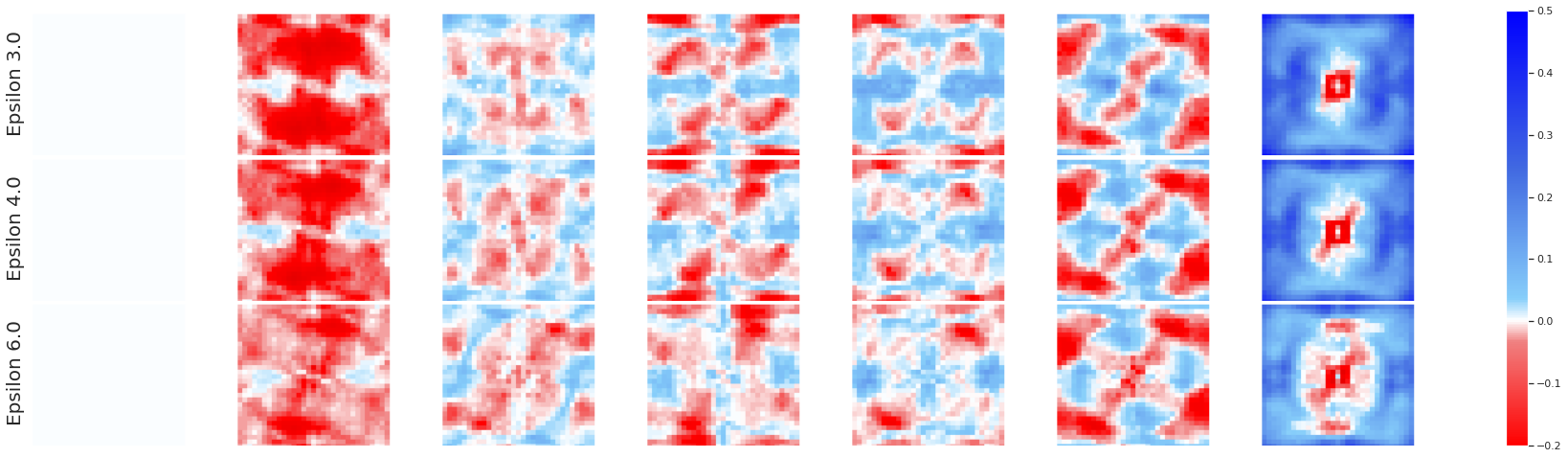}
  \end{tabular}
  \caption{{\bfseries Visualizing the response of compressed models to perturbations at different frequencies}: The top three rows are Fourier heatmaps for error rate of ResNet-18 models trained on CIFAR-10 with 95\% of weights pruned. The bottom three rows are difference to the baseline with blue regions indicating lower error rate than baseline.}
  \label{fig:fourier-resnet18-95-percent}
\end{figure}

\section{Constructing iterative pruning with rewinding from fine-tuning}
\label{appendix:n_shot}

While LTH \citep{frankle2018lottery} and LRR \citep{renda2020comparing} offer unsurpassed performance, such approaches also greatly extend the training duration, pruning just 20\% of the remaining weights every $T-r$ epochs, where $T$ is the initial training duration and $r$ is the epoch the weights/learning-rate are rewound to after each pruning event (here, $r=12$ and $T=160$). This raises the question: {\em Is longer training and the multi-shot pruning procedure critical to the robustness improvements LTH/LRR offer relative to FT/GMP?} 

To test this, we gradually construct the LTH/LRR pruning approaches used in this paper by starting from a fine-tuning approach and adding modifications until we produce the LTH/LRR method that prunes the network 13 times to reach 95\% sparsity. The phases of this construction for LRR are illustrated in Figure~\ref{fig:fourier-shots}, wherein we plot a column of Fourier heatmaps for each phase. Specifically, the first column is our FT approach, the second column extends the fine-tuning duration, the third column adds learning-rate rewinding to this fine-tuning period, the fourth column decreases the iterative prune rate to achieve 95\% sparsity in 4 shots rather than 1, and subsequent columns continue to increase the number of pruning shots. In this construction process, we find a notable benefit of adding learning-rate rewinding (70.9\% to 72.6\% CIFAR-10-C accuracy moving from column 2 to column 3), but the biggest benefits of LTH/LRR come from combining this rewinding with multiple iterations (i.e., all columns from 4 onward display at least 75\% robust accuracy). Interestingly, our results also indicate that it may be possible to achieve the robustness benefits of LTH/LRR with a higher iterative pruning rate and thus fewer pruning shots/iterations than what is standard in the literature \citep{frankle2018lottery,renda2020comparing}. 

\begin{figure}[t]
    \centering
    \includegraphics[width=\textwidth]{"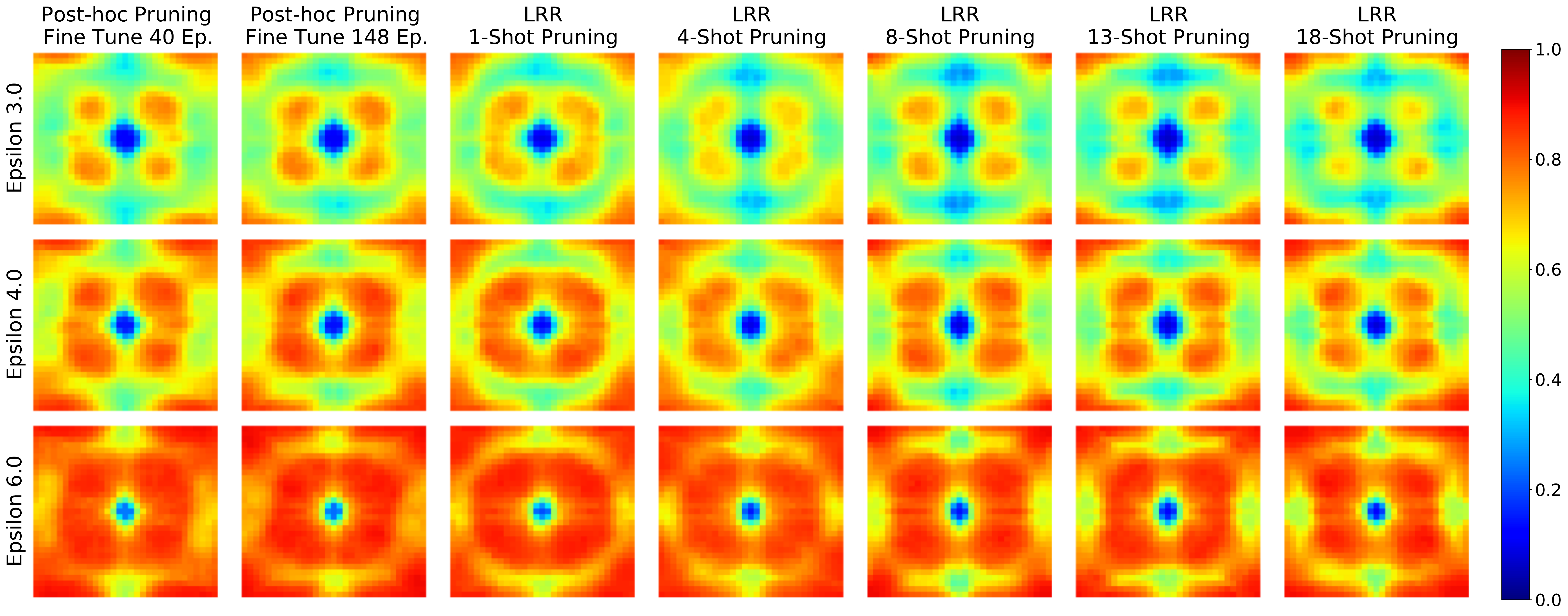"}
    \caption{{\bfseries Comparing the resiliencies of \RWD and Traditional methods to perturbations at different frequencies}: Fourier heatmaps for error rate of ResNet18 models trained on CIFAR-10 in which 95\% of the weights are pruned. 
    }
    \label{fig:fourier-shots}
\end{figure}
We now repeat this experiment using 90\% sparsity (Figure~\ref{fig:fourier-shots-90}), using LTH instead of LRR at 95\% sparsity (Figure~\ref{fig:fourier-shots-lth}), and using LTH and 90\% sparsity (Figure~\ref{fig:fourier-shots-lth-90}). 

At 95\% sparsity, we observe the same pattern: adding multiple shots of pruning is critical to improving the LTH heatmaps and robustnesses of the rewinding-based methods (Figure~\ref{fig:fourier-shots-lth}). That is to say, adding rewinding and a longer post-pruning fine-tuning duration to our FT method is not sufficient to obtain the results achievable with LTH/LRR---multiple iterations are needed. Interestingly, as especially visible at epsilon 6.0 in the Fourier heatmaps, LRR (Figure~\ref{fig:fourier-shots}) is clearly more resilient to perturbations than LTH, which is consistent with the improved performance of LRR relative to LTH.

\begin{figure}
    \centering
    \includegraphics[width=\textwidth]{"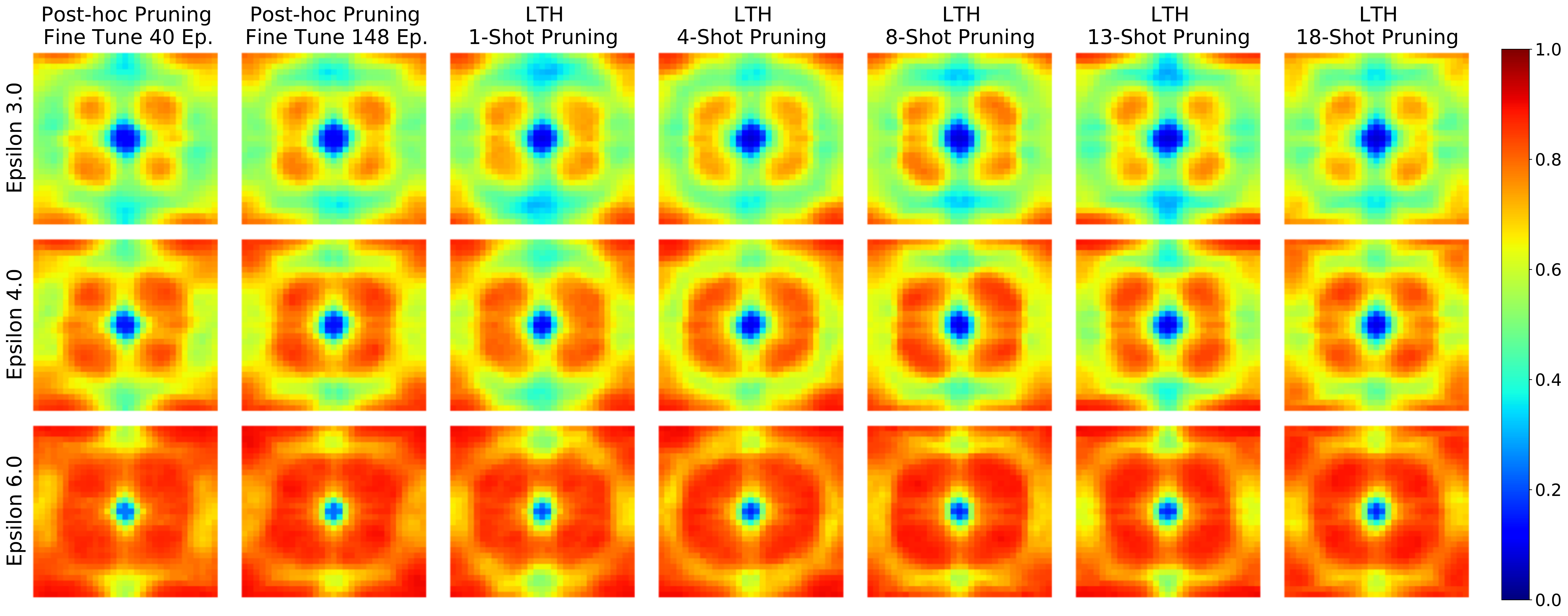"}
    \caption{{\bfseries Comparing the resiliencies of \RWD and Traditional methods to perturbations at different frequencies}: Fourier heatmaps for error rate of ResNet18 models trained on CIFAR-10 in which 95\% of the weights are pruned via LTH. 
    }
    \label{fig:fourier-shots-lth}
\end{figure}

At 90\% sparsity, for both LTH (Figure~\ref{fig:fourier-shots-lth-90}) and LRR (Figure~\ref{fig:fourier-shots-90}), the Fourier heatmaps reflect benefits of multiple shots and rewinding (particularly near the centers of the images for all epsilons). For LRR, there is greater similarity among the \RWD and \INIT Fourier heatmaps at 90\% sparsity than at 95\% sparsity, and this is reflected in their robustnesses in the captions, which are less separated in the 90\% sparsity case. Notably, however, all these robustness figures are consistent with the aforementioned heatmap improvements in that they show the benefits of combining rewinding with multiple pruning shots. Note that 10-shot pruning corresponds to the scheme / iterative pruning rate (20\%) we use to reach 90\% sparsity in other sections (e.g., Figure~\ref{fig:fourier-resnet18-90-percent}).

\begin{figure}
    \centering
    \includegraphics[width=\textwidth]{"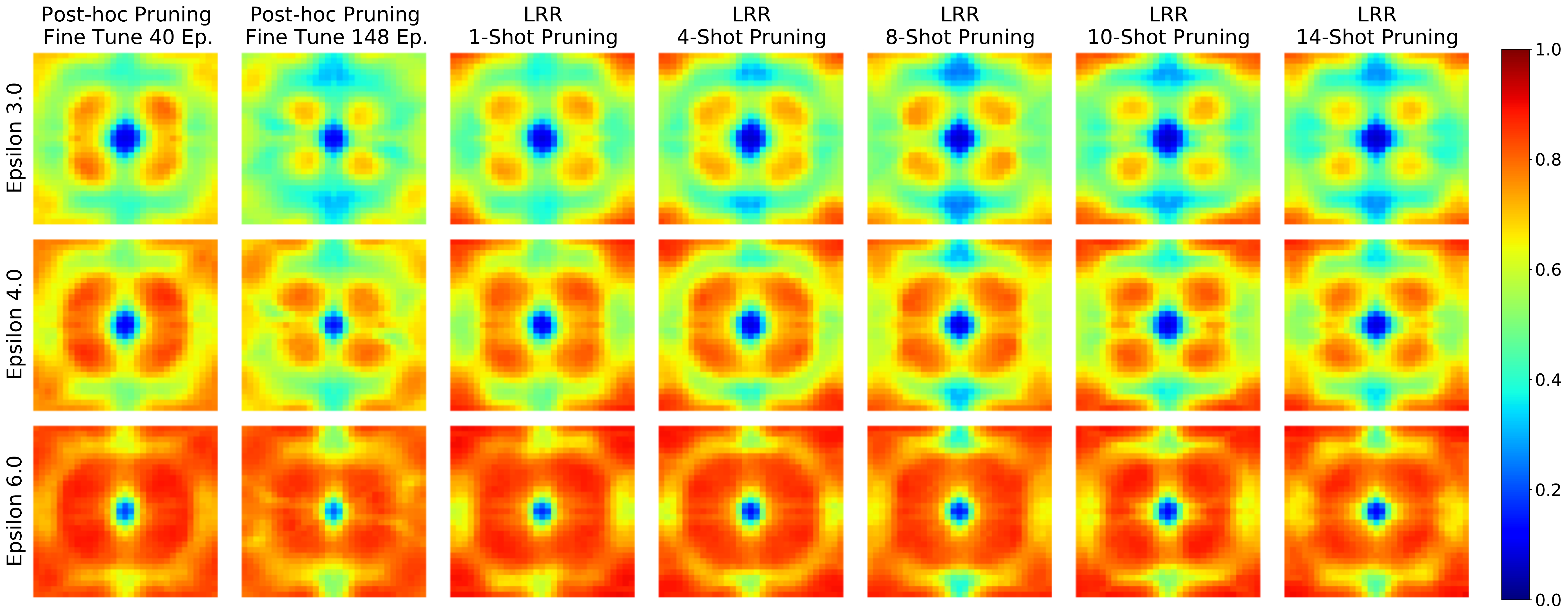"}
    \caption{{\bfseries Comparing the resiliencies of \RWD and Traditional methods to perturbations at different frequencies}: Fourier heatmaps for error rate of ResNet18 models trained on CIFAR-10 in which 90\% of the weights are pruned via LRR. 
    }
    \label{fig:fourier-shots-90}
\end{figure}
\begin{figure}
    \centering
    \includegraphics[width=\textwidth]{"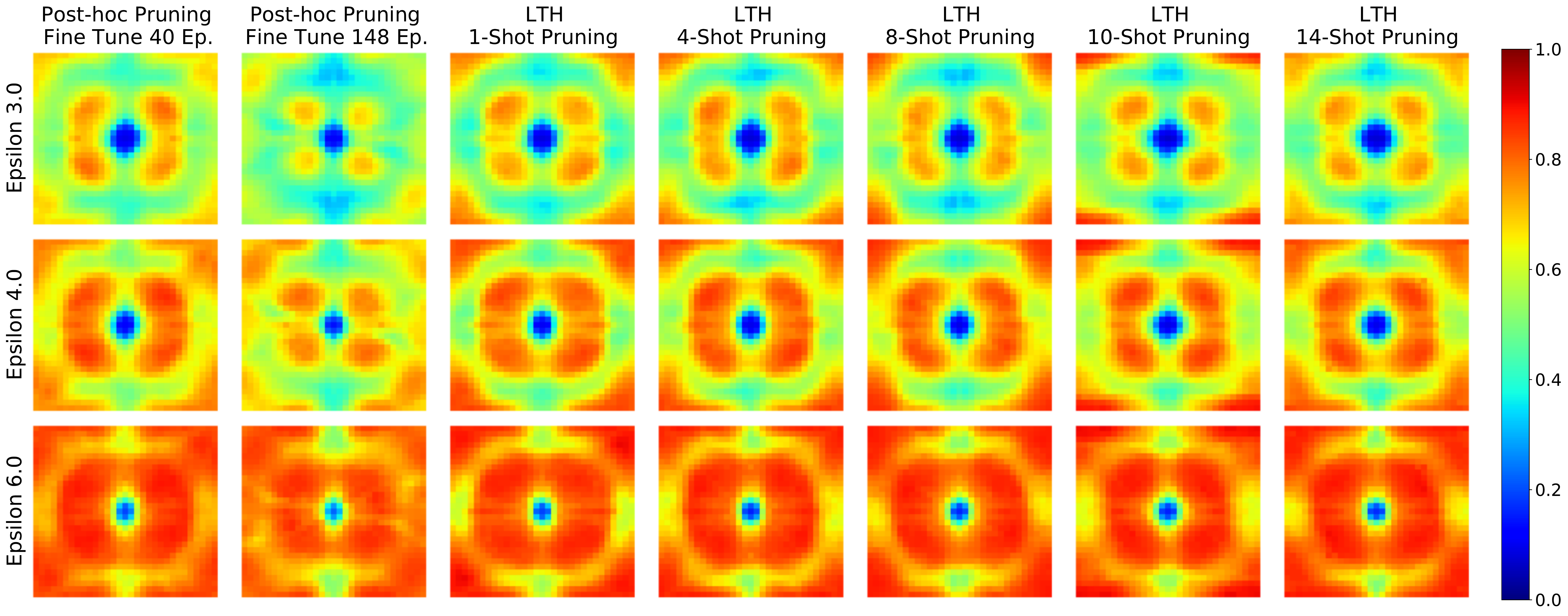"}
    \caption{{\bfseries Comparing the resiliencies of \RWD and Traditional methods to perturbations at different frequencies}: Fourier heatmaps for error rate of ResNet18 models trained on CIFAR-10 in which 90\% of the weights are pruned via LTH. 
    }
    \label{fig:fourier-shots-lth-90}
\end{figure}

\section{Additional results with CARDS and CARD-Deck}
\label{appendix:card-deck}
In this section, we provide tables for all experimental results from Section~\ref{sec:sota}.
This includes tables for individual \cnets on CIFAR-10 for ResNet-18 (Table~\ref{table:aug-card}), ResNeXt-29 (Table~\ref{table:resnext29-card}), ResNet-50 (Table~\ref{table:resnet50-card}), and WideResNet-18-2 (Table~\ref{table:wideresnet18-card}). 
Additionally, we provide tables for CIFAR-10 \cdecks using ResNet-18 (Table~\ref{table:card-deck-resnet18}), WideResNet-18-2 (Table~\ref{table:card-deck-wideresnet}),
and CIFAR-100 \cdecks using WideResNet-18-2 (Table~\ref{table:c100-card-deck-wideresnet}).
Breakdowns for the performance of CIFAR-10  ResNet-18 \cnets and \cdecks on each of the 15 corruption types in CIFAR-10-C are provided in Tables~\ref{table:cards-80} -- Tables~\ref{table:card-decks-95}. 
As a reference, tables for individual \cnets provide results for dense baseline models. 
Due to the structure of the table these results are intentionally repeated at each sparsity level (the dense baselines are not pruned so their performance remains constant).

\subsection{Tables of CARD and CARD-Decks results for ResNet-18 and WideResNet-18}
The clean and robust accuracies (averaged across three realizations) of \cnets for each pruning scheme are provided in Table~\ref{table:aug-card}.
We find that \cnets perform comparably to (and in some cases better than) their dense counterparts in terms of accuracy and robustness but have a significantly smaller memory footprint.

\begin{table}
\tiny
\centering
\setlength\tabcolsep{3 pt}
\begin{tabular}{ccccccccccccccccccccccc}
\toprule
{} & & \multicolumn{3}{c}{Baseline} & \multicolumn{18}{c}{\cnet} \\ \cmidrule(lr{4pt}){3-5}\cmidrule(lr){6-23}
{} & & \multicolumn{3}{c}{Dense} & \multicolumn{6}{c}{Edgepopup} & \multicolumn{3}{c}{LRR} & \multicolumn{3}{c}{LTH} & \multicolumn{6}{c}{Biprop} \\ \cmidrule(lr){3-5}\cmidrule(lr){6-11}\cmidrule(lr){12-14}\cmidrule(lr){15-17}\cmidrule(lr){18-23}
{} & & \multicolumn{3}{c}{-} & \multicolumn{3}{c}{Layerwise} & \multicolumn{3}{c}{Global} & \multicolumn{3}{c}{Global} & \multicolumn{3}{c}{Global} & \multicolumn{3}{c}{Layerwise} & \multicolumn{3}{c}{Global}\\ \cmidrule(lr){3-5}\cmidrule(lr){6-8}\cmidrule(lr){9-11}\cmidrule(lr){12-14}\cmidrule(lr){15-17}\cmidrule(lr){18-20}\cmidrule(lr){21-23}
{} &  &      \rotatebox{90}{Augmix} &         \rotatebox{90}{Clean} &      \rotatebox{90}{Gauss.} &           \rotatebox{90}{Augmix} &         \rotatebox{90}{Clean} &      \rotatebox{90}{Gauss.} &           \rotatebox{90}{Augmix} &         \rotatebox{90}{Clean} &      \rotatebox{90}{Gauss.} &         \rotatebox{90}{Augmix} &         \rotatebox{90}{Clean} &      \rotatebox{90}{Gauss.} &         \rotatebox{90}{Augmix} &         \rotatebox{90}{Clean} &      \rotatebox{90}{Gauss.} &           \rotatebox{90}{Augmix} &         \rotatebox{90}{Clean} &      \rotatebox{90}{Gaussian} &          \rotatebox{90}{Augmix} &         \rotatebox{90}{Clean} &      \rotatebox{90}{Gauss.} \\
\midrule
\multirow{3}{*}{\rotatebox[origin=c]{90}{\makecell{80\%}}} & Clean Acc.  &   \textbf{95.5} &  95.1 &     93.9 &     94.3 &  93.7 &     92.4 &      \textbf{94.9}  &  94.4 &       93 &   96.1 &  95.6 &     93.8 &   95.6 &  94.9 &     93.5 &   93.9 &  93.7 &     92.4 &      \textbf{94.5} &  94.1 &     93.2 \\
&Robust Acc. &     \textbf{89.2} &  73.7 &     85.6 &     87.8 &  76.7 &     85.1 &      \textbf{88.4}  &  74.4 &     85.7 &   89.8 &  75.7 &     86.4 &   89.4 &  74.4 &     85.8 &   87.5 &  76.1 &     85.1 &      \textbf{87.8} &  74.3 &     85.3 \\
&Memory (Mbit)      &      \textbf{358}  &   358 &      358 &     2.23 &  2.23 &     2.23 &      2.23 &  2.23 &     2.23 &   71.5 &  71.5 &     71.5 &   71.5 &  71.5 &     71.5 &   2.23 &  2.23 &     2.23 &      2.23 &  2.23 &     2.23 \\
\midrule
\multirow{3}{*}{\rotatebox[origin=c]{90}{\makecell{90\%}}} & Clean Acc.  &     \textbf{95.5} &  95.1 &     93.9 &          94.4 &         93.9 &            92.9 &             94.4 &            94.1 &               92.8 &   \textbf{96.3} &  95.6 &     93.9 &   95.7 &  95.2 &     93.6 &          \textbf{94.4} &         94.1 &            92.7 &             93.7 &            93.6 &                 93 \\
& Robust Acc. &     \textbf{89.2} &  73.7 &     85.6 &            \textbf{88}  &         75.6 &            85.2 &             87.9 &              76 &               85.4 &   \textbf{89.8} &  76.1 &     86.3 &   89.7 &  74.3 &       86 &          \textbf{87.8}  &         75.1 &              85 &             87.1 &            74.6 &               84.2 \\
& Memory (Mbit)      &      \textbf{358} &   358 &      358 &          1.12 &         1.12 &            1.12 &             1.12 &            1.12 &               1.12 &   35.8 &  35.8 &     35.8 &   35.8 &  35.8 &     35.8 &          1.12 &         1.12 &            1.12 &             1.12 &            1.12 &               1.12 \\
\midrule
\multirow{3}{*}{\rotatebox[origin=c]{90}{\makecell{95\%}}} & Clean Acc.  &     \textbf{95.5} &  95.1 &     93.9 &          \textbf{94.5}  &           94 &            92.6 &             93.2 &            92.7 &               91.2 &   96.1 &  95.7 &     93.9 &   \textbf{95.8} &  95.1 &     93.8 &          94.2 &         93.8 &            92.5 &             92.5 &            92.1 &               91.4 \\
& Robust Acc. &     \textbf{89.2} &  73.7 &     85.6 &          87.8 &         73.1 &            84.3 &             85.7 &            73.4 &               83.9 &   89.6 &  75.6 &     86.3 &   \textbf{89.7} &  74.3 &       86 &          87.5 &         73.8 &            84.4 &             84.7 &            73.4 &               83.1 \\
& Memory (Mbit)      &      \textbf{358} &   358 &      358 &          \textbf{0.56} &         0.56 &            0.56 &            \textbf{0.56} &            0.56 &               0.56 &   \textbf{17.9} &  17.9 &     17.9 &   \textbf{17.9} &  17.9 &     17.9 &         \textbf{0.56} &         0.56 &            0.56 &             \textbf{0.56}  &            0.56 &               0.56 \\
\bottomrule
\end{tabular}
\vspace{2mm}
\caption{Performance comparison between dense baselines and \cnets using ResNet-18 architecture. Clean and Robust Acc. refer to accuracy on CIFAR-10 and CIFAR-10-C, respectively. The best performance for each method is shown in bold. }
\label{table:aug-card}
\end{table}

\begin{table}
\tiny
\centering
\setlength\tabcolsep{3 pt}
\begin{tabular}{ccccccccccccccccccccccc}
\toprule
{} & {} & \multicolumn{12}{c}{\cdeck (Agnostic/Adaptive)} \\\cmidrule(lr){3-14}
{} & {}  & \multicolumn{3}{c}{Edgepopup (Global)} & \multicolumn{3}{c}{LRR} & \multicolumn{3}{c}{LTH} & \multicolumn{3}{c}{Biprop (Global)} \\\cmidrule(lr){3-5}\cmidrule(lr){6-8}\cmidrule(lr){9-11}\cmidrule(lr){12-14}
{} & {} & 2 & 4 & 6 & 2 & 4 & 6 & 2 & 4 & 6 & 2 & 4 & 6 \\
\midrule
\multirow{3}{*}{\rotatebox[origin=c]{90}{\makecell{80\%}}} & Clean Acc.  &           92.1/94.1 &          94/94.5 &             94.2/94.8 &       96/96 &    96.3/96.4 &    96.4/96.6 &    95.5/95.5 &      96/96.1 &    96.1/96.1 &     93.8/93.8 &     94.3/94.3 &     94.3/94.4 \\
& Robust Acc. &           85/88.9 &          88.4/89.8 &             88.9/90 &     89.7/90.9 &    91.7/91.8 &    91.9/92 &    89.4/90.4 &      91/91.2 &    91.2/91.4 &     87.5/88.6 &     88.8/89.3 &       89/89.6 \\
& Memory (Mbit)     &         4.47 &  8.94  &  13.4 &   143 &  286 &  429 &  143 &  286 &  429 &  4.47 &  8.94  &  13.4  \\
\midrule
\multirow{3}{*}{\rotatebox[origin=c]{90}{\makecell{90\%}}} & Clean Acc.  &           92.9/94.4 &          94.6/94.8 &             94.7/94.8 &     96.3/96.3 &    96.4/96.4 &    96.4/\textbf{96.6} &     95.7/95.7 &    95.9/95.7 &    96.2/\textbf{96.2} &       94/94 &     94.6/94.4 &     94.5/94.6 \\
& Robust Acc. &           85.2/89.2 &          88.6/90.1 &             89.3/\textbf{90.4} &     89.8/91.1 &    91.7/91.8 &      92/\textbf{92.1} &     89.4/90.5 &      91/91.3 &    91.3/\textbf{91.5} &     87.4/88.6 &     89.2/89.5 &     89.4/\textbf{89.9} \\
& Memory (Mbit)     &         2.23 &  4.47 &  6.70 &  71.5 &  143 &  215 &  71.5 &  143 &  215 &  2.23 &  4.47 &  6.70 \\
\midrule 
\multirow{3}{*}{\rotatebox[origin=c]{90}{\makecell{95\%}}} & Clean Acc.  &           92.6/94.5 &          94.2/94.8 &             94.5/\textbf{95.1} &     96.1/96.1 &     96.3/96.4 &    96.3/96.5 &     95.8/95.8 &       96/96.1 &    96.1/96.2 &       94/94 &     94.7/94.5 &     \textbf{94.7}/94.6 \\
& Robust Acc. &           84.3/88.6 &          87.7/89.5 &             88.4/89.9 &     89.6/91 &     91.6/91.8 &    91.9/92 &       89/90.3 &     90.8/91.1 &    91.1/91.4 &     87.2/88.5 &     88.9/89.4 &     89.2/89.7 \\
& Memory (Mbit)     &         \textbf{1.12} &  2.23 &  3.35 &  \textbf{35.8} &  71.5 &  107 &  \textbf{35.8} &  71.5 &  107 &  \textbf{1.12} &  2.23 &  3.35 \\
\bottomrule
\end{tabular}
\vspace{2mm}
\caption{Performance of domain-agnostic and domain-adaptive ResNet-18 \cdecks. Clean and Robust Acc. refer CIFAR-10 and CIFAR-10-C accuracy, respectively. }
\label{table:card-deck-resnet18}
\end{table}

\begin{table}
\tiny
\centering
\setlength\tabcolsep{3 pt}
\begin{tabular}{ccccccccccccccccccccccc}
\toprule
{} & {} & \multicolumn{12}{c}{\cdeck (Agnostic/Adaptive)} \\
\cmidrule(lr){3-14}
{} & {}  & \multicolumn{3}{c}{Edgepopup (Global)} & \multicolumn{3}{c}{LRR} & \multicolumn{3}{c}{LTH} & \multicolumn{3}{c}{Biprop (Global)} \\\cmidrule(lr){3-5}\cmidrule(lr){6-8}\cmidrule(lr){9-11}\cmidrule(lr){12-14}
{} & {} & 2 & 4 & 6 & 2 & 4 & 6 & 2 & 4 & 6 & 2 & 4 & 6 \\
\midrule
\multirow{3}{*}{\rotatebox[origin=c]{90}{\makecell{90\%}}} & Clean Acc.  &           
92.4/         92.9 &
94.0/         94.8 &
94.8/         95.1 &
96.3/         96.3 &
96.7/         96.7 &
96.7/         96.8 &
96.1/         96.1 &
96.5/         96.4 &
96.6/         96.6 &
92.4/         93.1 &
94.3/         94.5 &
94.9/         95.0 \\
& Robust Acc. &      
85.1/         86.2 &
88.6/         90.0 &
90.1/         90.6 &
90.6/         91.7 &
92.3/         92.3 &
92.5/         92.6 &
90.1/         91.2 &
91.6/         91.8 &
91.9/         92.1 &
85.3/         86.2 &
88.7/         89.8 &
89.9/         90.5 \\
& Memory (Mbit) &
8.93 & 17.86 & 26.79 & 
285.8 & 571.6 & 857.5 &
285.8 & 571.6 & 857.5 &
8.93 & 17.86 & 26.79
\\
\midrule 
\multirow{3}{*}{\rotatebox[origin=c]{90}{\makecell{95\%}}} & Clean Acc.  &           
         92.8/         93.4 &
         94.6/         94.9 &
         95.1/         \textbf{95.3} &
         96.3/         96.3 &
         96.8/         96.8 &
         96.6/\textbf{96.8} &
         96.1/         96.1 &
         96.5/         96.4 &
         96.6/         \textbf{96.7} &
         92.3/         92.8 &
         94.2/         94.5 &
         95.0/         \textbf{95.2} \\
& Robust Acc. &           
85.2/         86.1 &
88.6/         89.9 &
90.0/         \textbf{90.6} &
90.8/         91.8 &
92.4/         92.5 &
92.7/\textbf{92.75} &
89.9/         91.4 &
91.6/         91.9 &
91.9/         \textbf{92.2} &
84.9/         86.0 &
88.4/         89.5 &
90.0/         \textbf{90.5} \\
& Memory (Mbit) &
\textbf{4.46} & 8.93 & 13.39 &
\textbf{142.9} & 285.8 & 428.7 &
\textbf{142.9} & 285.8 & 428.7 &
\textbf{4.46} & 8.93 & 13.39 \\
\bottomrule
\end{tabular}
\vspace{2mm}
\caption{Performance of domain-agnostic and domain-adaptive WideResNet-18 \cdecks. Clean and Robust Acc. refer to CIFAR-10 and CIFAR-10-C accuracy, respectively.  The best performance for each method is shown in bold. For reference, dense WideResNet-18 (AugMix) model achieves (Clean Acc., Robust Acc., Memory) = (95.6\%, 89.3\%, 1429 Mbit).}
\label{table:card-deck-wideresnet}
\end{table}

\begin{table}
\tiny
\centering
\setlength\tabcolsep{3 pt}
\begin{tabular}{ccccccccccccccccccccccc}
\toprule
{} & {} & \multicolumn{12}{c}{\cdeck (Agnostic/Adaptive)} \\
\cmidrule(lr){3-14}
{} & {}  & \multicolumn{3}{c}{Edgepopup (Global)} & \multicolumn{3}{c}{LRR} & \multicolumn{3}{c}{LTH} & \multicolumn{3}{c}{Biprop (Global)} \\\cmidrule(lr){3-5}\cmidrule(lr){6-8}\cmidrule(lr){9-11}\cmidrule(lr){12-14}
{} & {} & 2 & 4 & 6 & 2 & 4 & 6 & 2 & 4 & 6 & 2 & 4 & 6 \\
\midrule
\multirow{3}{*}{\rotatebox[origin=c]{90}{\makecell{90\%}}} & Clean Acc.  &           
77.1/77.1 &          
78.5/78.4 &          
78.6/78.7 &
78.3/78.3 &
79.6/79.7 &
79.6/80.2 &
78.2/78.2 &
79.6/79.7 &
79.9/80.3 &
76.9/76.9 &
77.8/78.1 &          
78.0/\textbf{78.3} \\
& Robust Acc. &      
66.3/67.8 &
69.5/69.6 &
69.9/\textbf{70.3} &
66.9/68.8 &
70.5/70.7 &
71.0/71.2 &
66.2/68.6 &
69.8/70.4 &
70.6/71.0 &
65.8/67.3 &
68.7/69.0 &
69.1/\textbf{69.6} \\
& Memory (Mbit) &
8.95 & 17.90 & 26.85 & 
286.5 & 572.9 & 859.3 &
286.5 & 572.9 & 859.3 &
8.95 & 17.90 & 26.85
\\
\midrule 
\multirow{3}{*}{\rotatebox[origin=c]{90}{\makecell{95\%}}} & Clean Acc.  &           
77.1/77.1 &          
78.5/78.5 &          
78.7/\textbf{79.1} &
78.7/78.7 &          
79.9/80.2 &          
80.1/\textbf{80.6} & 
78.2/78.2 &          
79.7/80.0 &          
79.8/\textbf{80.4} & 
76.0/76.0 &
77.4/77.1 &          
77.9/77.8 \\
& Robust Acc. &           
65.6/67.1 &
68.9/69.0 &
69.4/69.7 &
67.1/68.8 &
70.6/70.7 &
71.1/\textbf{71.3} &
66.5/68.6 &
70.1/70.3 &
70.7/\textbf{71.0} &
64.8/66.5 &
67.9/68.1 &
68.4/68.7 \\
& Memory (Mbit) &
\textbf{4.48} & 8.95 & 13.43 &
\textbf{143.2} & 286.5 & 429.7 &
\textbf{143.2} & 286.5 & 429.7 &
\textbf{4.48} & 8.95 & 13.43 \\
\bottomrule
\end{tabular}
\vspace{2mm}
\caption{Performance of domain-agnostic and domain-adaptive WideResNet-18 \cdecks. Clean and Robust Acc. refer to CIFAR-100 and CIFAR-100-C accuracy, respectively.  The best performance for each method is shown in bold. For reference, dense WideResNet-18 (AugMix) model achieves (Clean Acc., Robust Acc., Memory) = (77.5\%, 66.1\%, 1433 Mbit).}
\label{table:c100-card-deck-wideresnet}
\end{table}

\subsection{Additional results for CARDs and CARD-Decks with ResNet-18}
\label{appendix:card-tables}
In this section, we provide a breakdown of the accuracy of ResNet-18 \cnets and \cdecks by CIFAR-10-C corruption types. In particular, Tables~\ref{table:cards-80} - \ref{table:cards-95} contain the performance of \cnets trained on clean, Augmix, and Gaussian augmentations when tested on CIFAR-10-C corruption types. Tables~\ref{table:card-decks-80} to \ref{table:card-decks-95} contain the performance of LTH, LRR, EP, and BP \cdecks on individual CIFAR-10-C corruptions.

As a note of interest, we found that the best performance on different CIFAR-10-C corruptions changes for individual \cnets as the sparsity level increases. At 80\% sparsity, a Gaussian \cnet yields the highest accuracy on impulse noise but at 90\% and 95\% sparsity levels Augmix \cnets deliver the highest accuracy on impulse noise. Further, at 95\% sparsity the margin of difference in accuracy on impulse noise provided by the Augmix \cnet over the Gaussian \cnet is more significant. 

While pruning using FT and GMP were unable to yield \cnets, note that we include the accuracy and robustness of ResNet-18 models pruned using FT and GMP with the same augmentation schemes in Table~\ref{table:resnet18-ft-gmp} for comparison against the performance of ResNet-18 \cnets in Table~\ref{table:aug-card}.


\subsection{Achieving state-of-the-art performance on CIFAR-10-C using larger models}
\label{appendix:larger_models}


We report these results in Tables \ref{table:resnext29-card}, \ref{table:resnet50-card} and \ref{table:wideresnet18-card}.
To summarize, our results highlight the fact that the accuracy/robustness gains due to the model compression (and ensembling) are compatible with the gains from the existing strategies, i.e., data augmentation and the use of larger models. By combining these strategies with the scheme proposed in this paper, we achieve even larger gains in terms of robustness and accuracy, in turn, establishing a new SOTA. Note that we include performance of WideResNet-18 \cdecks composed of layerwise pruned BP and EP models in Table~\ref{table:layerwise-agnostic-card-deck}.

\subsection{Note on gating function performance}

In Table~\ref{table:gating-performance}, we provide a break down of the performance of the spectral-similarity based gating function by CIFAR-10-C corruption type. For each augmentation scheme and corruption type, the corresponding number indicates the percetage of data from that corruption selected by the gating function averaged across the 5 severity levels in CIFAR-10-C. Based on the performance of Augmix and Gaussian \cnets by CIFAR-10-C corruption type in Tables~\ref{table:cards-80} -- \ref{table:cards-95}, entries in the table are marked in bold whenever a model pruned to sparsity 80\%, 90\%, or 95\% trained using that data augmentation scheme achieved the highest accuracy averaged over all severity levels of that corruption type. Bolding these entries in Table~\ref{table:gating-performance} indicates that the gating function typically selects the best performing augmentation scheme, and thereby the \cnets in the deck trained on data most similar to the incoming test data, for the domian-adaptive \cdecks. Improvements could be made by determining a gating function that is more accurate on the frost and jpeg corruptions. As noted in Section~\ref{appendix:card-tables}, the augmentation scheme yielding the best performing models on impulse and glass corruptions varies with the sparsity level of the pruned network. This observation indicates that an alternative similarity metric that takes into account features of the trained \cnets, such as sparsity level, could provide a gating function that offers improved performance on CIFAR-10-C corruptions.

\begin{table}[ht]
\small
\centering
\setlength\tabcolsep{4 pt}
\rotatebox{270}{

\vspace{2mm}
\caption{{\bfseries Gating Function Selection by Corruption Type on CIFAR-10-C}: The \cdecks make use of models trained on Augmix and Gaussian augmented datasets. Here we provide the percentage of CIFAR-10-C test data that was gated to the Augmix and Gaussian models in the \cdeck based on the type of corruption. For each corruption type, the bold number indicates which \cnet achieves higher performance (on average) on that corruption. This highlights that our spectral similarity based gating function typically selects the best performing model. Note that for impulse and glass corruptions, the best performing models vary between those trained on Augmix and Gaussian corruptions based on the sparsity level of the model. For each corruption type, the percentage in the table gated to each augmentation method is an average over the gating percentages on CIFAR-10-C corruption severity levels 1 through 5.}
\label{table:gating-performance}
\end{table}

\section{Theory}
\label{appendix:theory}

\subsection{Proof of Corollary~\ref{thm:informal-card-deck}}
Using the triangle inequality, we have that
\begin{align}
\Big\| \sum_{k=1}^n \lambda_i f_k (x) - \sum_{k=1}^n \lambda_i F_k(\ell,\bm{w}) (x) \Big\|
&\leq \sum_{k=1}^n \lambda_i \| f_k (x) - F_k(\ell,\bm{w}) (x) \|, \label{eq:card-deck-1}
\end{align}
for any $x \in \mathcal{X}$. Hence, if 
\begin{align}
    \sup_{x \in \mathcal{X}} \| f_k(x) - F_k(\ell,\bm{w}) (x) \| \leq \varepsilon
    \label{eq:card-deck-2}
\end{align}
for each $1 \leq k \leq n$, then it immediately follows that
\begin{align}
\Big\| \sum_{k=1}^n \lambda_i f_k(x) - \sum_{k=1}^n \lambda_i F_k(\ell,\bm{w}) (x) \Big\|
&\leq \sum_{k=1}^n \lambda_i \varepsilon
= \varepsilon, \label{eq:card-deck-3}
\end{align}
for all $x \in \mathcal{X}$. Under the hypotheses of Theorem 3 in \citep{orseau2020logarithmic} (Theorem 2 in \citep{diffenderfer2021multiprize}), 
for each $k \in \{ 1, \ldots, n \}$ we have that with probability $(1 - \delta)$ there exists a full-precision (binary-weight)
\cnet satisfying (\ref{eq:card-deck-2}). Thus, with probability $(1 - \delta)^n$ there exists a collection of full-precision (binary-weight) networks $\{ f_k \}$ satisfying (\ref{result:card-deck-1}).

\subsection{OOD Robustness analysis}
To understand the average OOD robustness better, we derive the following decomposition:
\begin{eqnarray}
    Rob(\mathcal{D}_c, f^{a})&=& 
     Rob(\hat{S_a}, f^{a}) \nonumber\\
                 &+& [Rob(\mathcal{D}_a, f^{a}) - Rob(\hat{S_a}, f^{a})] \nonumber\\
                &+&[Rob(\mathcal{D}_c, f^{a})- Rob(\mathcal{D}_a, f^{a})]. \nonumber
\end{eqnarray}

Next, using the triangle inequality $a+(b-a) \geq a -\|b-a\|$ which is true because $\|b-a\|\geq a-b$ for $a, b\geq 0$, we have
\begin{eqnarray}
    Rob(\mathcal{D}_c, f^{a})&\geq& 
     Rob(\hat{S_a}, f^{a}) \nonumber\\
                 &-& \|Rob(\mathcal{D}_a, f^{a}) - Rob(\hat{S_a}, f^{a})\| \nonumber\\
                &-&\|Rob(\mathcal{D}_c, f^{a})- Rob(\mathcal{D}_a, f^{a})\|. \nonumber
\end{eqnarray}

By linearity of expectation, we can bound \eqref{avg_robust} from below  

 \begin{eqnarray}
Rob(\mathcal{D}^C, f^{Deck}) &\geq& \sum_{a=1}^{|A|}\sum_{c=1}^{|C|} w_c^a Rob(\hat{S_a}, f^{a})\label{empirical}\\
         &-&\sum_{a=1}^{|A|}\sum_{c=1}^{|C|} w_c^a\|Rob(\mathcal{D}_a, f^{a}) - Rob(\hat{S_a}, f^{a})\|  \label{gen-shift}\\
         &-&\sum_{a=1}^{|A|}\sum_{c=1}^{|C|} w_c^a\|Rob(\mathcal{D}_c, f^{a})- Rob(\mathcal{D}_a, f^{a})\|. \label{avg-dist-shift}
\end{eqnarray}
Note that we have bounded \eqref{avg_robust} in terms of the following three error terms for a classifier-corruption pair weighted by their gating (or selection) probabilities: 1) empirical robustness \eqref{empirical}, 2) generalization gap \eqref{gen-shift}, and 3) OOD-shift \eqref{avg-dist-shift}.

Next, we aim to provide a bound on the OOD-shift that is independent of the classifiers in hand and is only related to the properties of the distributions. To facilitate this, we define a notion of distance between two distributions.

\begin{Df}[ (Conditional Wasserstein distance]
\label{wasserstein}
For two labeled distributions $\mathcal{D}$ and $\mathcal{D}'$ with supports on $X\times Y$, we define conditional Wasserstein distance according to a distance metric $d$ as follows:
\begin{equation}
    \text{W}(\mathcal{D}, \mathcal{D}') = \E_{(.,y)\sim \mathcal{D}}\left[ \underset{J\in \mathcal{J}(\mathcal{D}|y, \mathcal{D}'|y)}{\inf}  \E_{(x,x')\sim J} d(x, x') \right],
\end{equation}
where $\mathcal{J}(\mathcal{D}, \mathcal{D}')$ is the set of joint distributions whose marginals are identical to $\mathcal{D}$ and $\mathcal{D}'$.
\end{Df}
Conditional Wasserstein distance between the two distributions is simply the expectation of Wasserstein distance between conditional distributions for each class.

\end{document}